\pdfoutput=1
\documentclass[11pt]{article}
\usepackage[preprint]{acl}
\usepackage{times}
\usepackage{latexsym}
\usepackage[T1]{fontenc}
\usepackage[utf8]{inputenc}
\usepackage{microtype}
\usepackage{inconsolata}
\usepackage{graphicx}
\usepackage{booktabs}       
\usepackage{amsfonts}       
\usepackage{nicefrac}       
\usepackage{microtype}      
\usepackage{xcolor}         
\usepackage{amsmath}
\usepackage{amssymb}
\usepackage{mathtools}
\usepackage{amsthm}
\usepackage{wrapfig}
\usepackage{algorithm}
\usepackage{algorithmic}
\usepackage{tcolorbox}
\usepackage{colortbl}
\usepackage{enumitem}
\usepackage{marvosym}
\usepackage[capitalize,noabbrev]{cleveref}
\DeclareMathOperator*{\ours}{ReCo}

\theoremstyle{plain}
\ifx\theorem\undefined
\newtheorem{theorem}{Theorem}

\newtheorem{proposition}{Proposition}

\theoremstyle{definition}
\newtheorem{definition}{Definition}

\title{Accelerating Training of Autoregressive Video Generation Models \\via Local Optimization with Representation Continuity}

\author{Yucheng Zhou, ~Jianbing Shen$^{\text{\Letter}}$\\
SKL-IOTSC, CIS, University of Macau \\
\texttt{yucheng.zhou@connect.um.edu.mo, jianbingshen@um.edu.mo}
}

\begin{document}
\maketitle
\renewcommand{\thefootnote}{\Letter} 
\footnotetext{Corresponding Author.}

\begin{abstract}
Autoregressive models have shown superior performance and efficiency in image generation, but remain constrained by high computational costs and prolonged training times in video generation. 
In this study, we explore methods to accelerate training for autoregressive video generation models through empirical analyses. 
Our results reveal that while training on fewer video frames significantly reduces training time, it also exacerbates error accumulation and introduces inconsistencies in the generated videos.
To address these issues, we propose a Local Optimization (Local Opt.) method, which optimizes tokens within localized windows while leveraging contextual information to reduce error propagation. 
Inspired by Lipschitz continuity, we propose a Representation Continuity (ReCo) strategy to improve the consistency of generated videos. ReCo utilizes continuity loss to constrain representation changes, improving model robustness and reducing error accumulation.
Extensive experiments on class- and text-to-video datasets demonstrate that our approach achieves superior performance to the baseline while halving the training cost without sacrificing quality.
\end{abstract}

\section{Introduction}
Existing visual generative models based on the diffusion model demonstrate excellent visual generation capabilities \citep{LVDM,Skip,OpenSora}.
Recently, many studies \citep{LlamaGen,MAR,songbroad,zhoucondition} explore the potential of autoregressive language models in image generation, discovering that they offer advantages over diffusion-based models in inference speed and performance.
Furthermore, these models show significant potential for integration with large language models (LLMs), enabling the development of large multimodal models that unify understanding and generation capability \citep{Chameleon}.

\begin{figure}[t]
    \centering
    \includegraphics[width=1\linewidth]{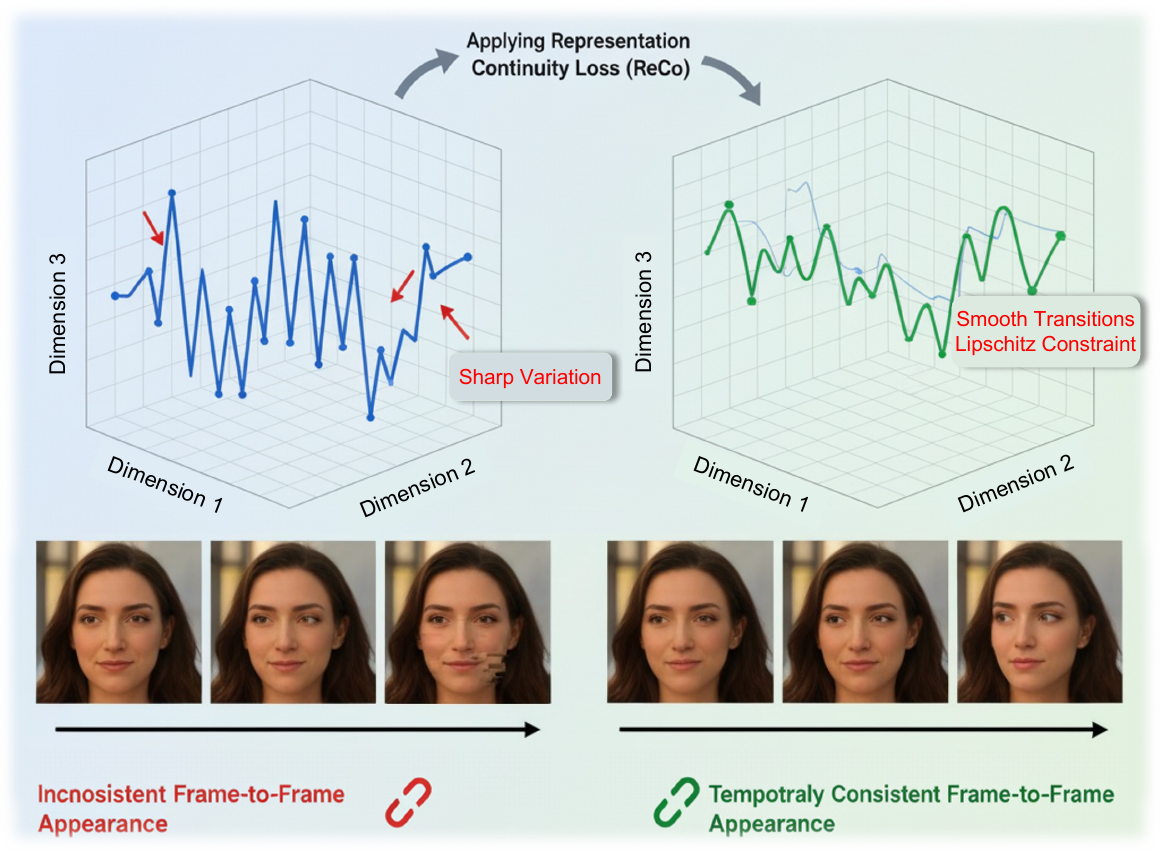}
    \vspace{-4mm}
    \caption{\small \textbf{Left:} Autoregressive models exhibit abrupt representation changes, causing temporal inconsistencies. \textbf{Right:} Our ReCo enforces smooth transitions via continuity loss, yielding temporally consistent videos.}
    \label{fig:intro}
    \vspace{-2mm}
\end{figure}

Despite the success of autoregressive models in image generation, recent research focuses on extending autoregressive models to video generation \citep{Loong,Emu3,VRC}. 
Autoregressive video generation models, modeling on discrete tokens from Vector Quantized Variational Autoencoder (VQVAE \citep{OmniTokenizer}), demonstrate promising performance. 
However, these models encounter challenges related to computational efficiency, due to the significantly longer video token sequences compared to image token sequences, which increase both training and inference costs.
To reduce the inference cost, \citet{VRC} make the first attempt to accelerate autoregressive video generation by vision representation compression. 
Despite their success, autoregressive models for video generation continue to encounter significant challenges in training, i.e., high computational costs and long training times.

In this study, we conduct extensive experiments to investigate the training acceleration of autoregressive video generation. 
First, we explore a Fewer-Frames method, where the model can be trained on sequences with fewer frames, and more video frames are generated iteratively during inference.
Although this approach reduces training time relative to the baseline, it paradoxically increases inference time significantly.
Moreover, it underperforms the baseline due to inconsistencies in the generated videos.
We theoretically demonstrate that the Fewer-Frames model exhibits greater accumulated error compared to the baseline. 
Further empirical validation is provided by evaluating the quality and consistency of its generated video frames using two evaluation metrics, i.e., PSNR \citep{PSNR} and Optical Flow \citep{OpticalFlow}.

To alleviate the inconsistency issue on the Fewer-Frames method, we propose the Local Optimization (Local Opt.) method. 
This approach optimizes tokens within a window, including frame blocks, while considering preceding tokens as context.
Random placement and overlapping of windows in training can enhance the consistency of generated videos. 
We theoretically prove that Local Opt. achieves lower cumulative error than the Fewer-Frames model while maintaining baseline-level inference time.
Further analyzing the loss distribution across frames reveals that token generation difficulty decreases as the sequence progresses, and poor quality in initial frames adversely affects subsequent frames. 
To mitigate this, we prioritize window sampling in initial frames, significantly improving video quality.

To further enhance consistency in videos generated by Local Opt., we draw inspiration from \textit{Lipschitz continuity} and propose the Representation Continuity (ReCo) training method, as shown in Figure~\ref{fig:intro}. 
In addition to cross-entropy loss within each window, ReCo incorporates a representation continuity loss. 
We theoretically demonstrate that ReCo reduces generation errors and enhances both video quality and consistency compared to Local Opt. 
Experimental evaluations on class- and text-to-video generation datasets show that our approach outperforms existing autoregressive video generation methods, achieving twice the training speed of the baseline. 
Furthermore, the loss distribution of video tokens in our method is smoother than that of the baseline, and our approach matches the baseline in video consistency.

Our main contributions are as follows:
\begin{itemize}[leftmargin=*, itemsep=0pt]
\item We conduct empirical analysis to accelerate training for autoregressive video generation models. The analysis provides insights, including that the Fewer-Frames Model trains faster but produces videos with notable inconsistencies.
\item We propose the Local Opt. training method and show its advantages over the Fewer-Frames Model in both training and inference through extensive experiments. Additionally, we theoretically prove that Local Opt. reduces error accumulation compared to the Fewer-Frames Model.
\item We propose a Representation Continuity strategy to improve Local Opt. consistency while retaining training speed advantages. Both theory and experiments show that this strategy reduces error accumulation and enhances consistency, achieving better-than-baseline performance with half the training cost.
\end{itemize}

\section{Related Work}\label{app:related}
\subsection{Diffusion-based Video Generation}
Text-to-video generation has attracted considerable attention in recent years, fueled by advancements in deep generative models such as diffusion models and Transformers \citep{stablevideodiffusion,confiner,phenaki,modelscope}. Diffusion models have demonstrated strong capabilities in generating high-quality and temporally consistent videos. Approaches like VideoLCM~\citep{videolcm} and Stable Video Diffusion~\citep{stablevideodiffusion} leverage latent spaces and temporal consistency losses to ensure scalable and coherent video generation. Similarly, frameworks such as Show-1~\citep{show1}, VideoFactory~\citep{videofactory}, and Latte~\citep{latte} introduce techniques like hierarchical latent spaces, spatial-temporal attention mechanisms, and Transformer-based modeling to enhance video quality, computational efficiency, and flexibility in generating long sequences. ConFiner~\citep{confiner} further contributes by proposing a training-free framework that utilizes diffusion model experts for temporal control, eliminating the need for extensive retraining. Transformer-based architectures, on the other hand, excel at capturing long-range temporal dependencies. CogVideo~\citep{cogvideo} and Phenaki~\citep{phenaki} utilize Transformers pretrained on large datasets to achieve variable-length and cross-domain video generation, ensuring temporal consistency across diverse scenes. Latte~\citep{latte} combines Transformer-based temporal modeling with latent diffusion, effectively bridging these two paradigms. Some works explore user-centric strategies to enhance video generation. InstructVideo~\citep{instructvideo} integrates reinforcement learning with human feedback to better align outputs with textual instructions, particularly for ambiguous prompts. ModelScope~\citep{modelscope} focuses on usability and scalability, offering an open-source framework with a modular design for diverse text-to-video applications. Recent studies also investigate efficient and controllable diffusion generation, including locality-aware dynamic rescue for diffusion LLMs~\citep{wang2026ladr}, hierarchical compositional generation via reinforcement learning~\citep{yang2025hicogen}, decoupling inter- and intra-element conditions~\citep{yang2025dc}, stabilized diffusion Transformers through long-skip connections with spectral constraints~\citep{chen2025towards}, and self-rewarding large vision-language models for prompt optimization~\citep{yang2025self}.

\subsection{Autoregressive Video Generation}
Autoregressive video generation has gained traction due to its ability to capture long-range dependencies, particularly inspired by the success of large language models (LLMs) \citep{zhou2024visual,zhou2025weak,zhou2023thread}. Some works like \cite{VideoGPT,ren2025next} utilize a two-stage pipeline with VQ-VAE for video compression and Transformers for temporal modeling, achieving scalable and temporally consistent outputs. Simplifying this pipeline, \citet{NOVA2023} directly models pixel sequences with autoregressive Transformers, reducing architectural complexity while maintaining competitive quality.
Recent advancements focus on improving efficiency and extending temporal coherence. 
For example, Emu3 \citep{Emu3} leverages LLM-inspired architectures for efficient sequence modeling, while Loong \citep{Loong} employs a hierarchical autoregressive structure to generate minute-long videos. 
In the image domain, \citet{songbroad} proposes entropy-guided optimization for stable autoregressive synthesis, and \citet{zhoucondition} refines condition errors in autoregressive generation with diffusion loss.
Similarly, \citet{VRC} emphasizes compressing visual representations to reduce redundancy, improving scalability and computational efficiency. Similarly, \citet{DiCoDe2023} combines diffusion processes with deep token compression, producing high-quality latent representations that are sequenced using Transformers. Additionally, \citet{Multimodal2023} integrates diffusion processes with multimodal learning, ensuring temporal and semantic coherence across modalities.
Moreover, \citet{MovieGen2023}, which adopts a modular design to synergize foundational models for each modality, enables semantically coherent, content-rich video outputs adaptable to diverse tasks.
However, current research predominantly focuses on video modeling and inference speed improvements while neglecting the computational overhead during training.

\section{Preliminaries}
Autoregressive video generation formulates video synthesis as a token-by-token prediction using language models, e.g., GPT \citep{GPT2}. 
In this paradigm, videos are first tokenized via VQ-VAE, e.g., OmniTokenizer \citep{OmniTokenizer}, which discretizes video representations to facilitate autoregressive modeling.

\paragraph{Discrete Video Representations.}
VQ-VAE encodes each video frame into a continuous latent space and then quantizes it into discrete codes. Given a video $\mathbf{V} = \{ \mathbf{v}_1, \dots, \mathbf{v}_t \}$, frames are compressed both temporally and spatially. Temporal compression reduces the number of frames by a factor of $\alpha$, resulting in $\bar{t} = 1 + \frac{t - 1}{\alpha}$ frame blocks, with the first frame preserved. Spatial compression reduces the resolution of each frame. Specifically, each input video $\mathbf{V}$ is mapped to a sequence of quantized latent vectors by first encoding it into continuous representations $\mathbf{Z} = \{ \mathbf{z}_1, \dots, \mathbf{z}_{\bar{t}} \} = f_{\text{enc}}(\mathbf{V})$, and then quantizing each $\mathbf{z}_t$ to its nearest codebook entry from $\mathcal{E} = \{ \mathbf{e}_1, \dots, \mathbf{e}_M \}$:
\begin{align}
\mathbf{e}_t = \arg\min_{\mathbf{e}_i \in \mathcal{E}} \| \mathbf{z}_t - \mathbf{e}_i \|_2.
\end{align}
\paragraph{Autoregressive Video Generation.}
Given a condition $\mathbf{c}$, an autoregressive model generates each token $\mathbf{e}_t$ based on all previous tokens $\mathbf{e}_{<t}$ and the condition. The resulting token sequence $\mathbf{E} = (\mathbf{e}_1, \mathbf{e}_2, \dots, \mathbf{e}_{\bar{t}})$ is then decoded by a VQ-VAE decoder $f_{\text{dec}}$ to reconstruct the video $\mathbf{V}$:
\begin{align}
\mathbf{V} = f_{\text{dec}}(\mathbf{E}), P(\mathbf{E} \mid \mathbf{c}) = \prod_{t=1}^{\bar{t}} P(\mathbf{e}_t \mid \mathbf{e}_{<t}, \mathbf{c})
\end{align}

\section{Accelerating AR Video Modeling}
\subsection{Training on Fewer Frames}

\begin{figure}
    \centering
    \includegraphics[width=1\linewidth]{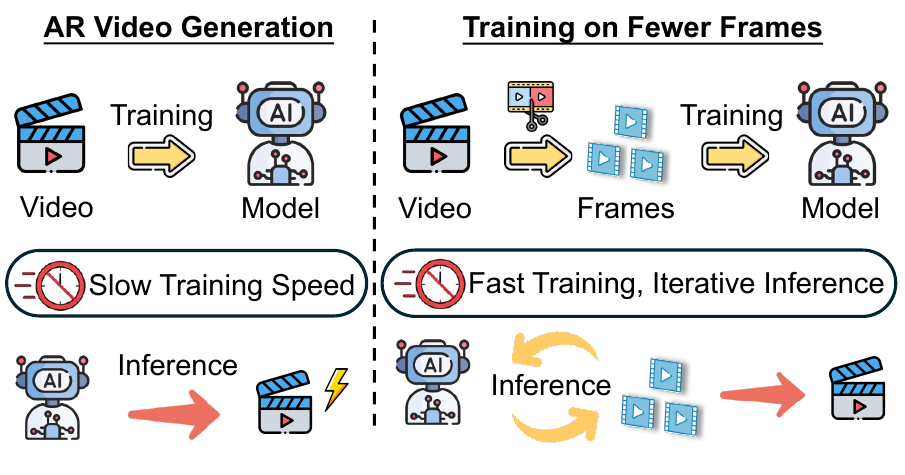}
    \vspace{-6mm}
    \caption{\small Comparison of AR video model training methods: ({\bf Left}) full-video training vs. ({\bf Right}) fewer-frame training.}
    \label{fig:fewer}
\end{figure}

\begin{table}[t]\small
\centering
\begin{tabular}{lcccc}
\toprule
\bf Method & \bf FFS & \bf SKY & \bf Train Speed ↑ \\
\midrule
Baseline & ~73.65 & ~89.09 & 0.84~~~~~~~~~~~~ \\
Fewer-Frames & 229.32 & 292.41 & 1.75 \textcolor{blue}{($\times$2.5)} \\
\bottomrule
\end{tabular}
\caption{\small 
Comparison of FVD performance across methods on the FFS and SKY datasets. 
`Train Speed'' is the training time for 24 videos on one A100 GPU (\textcolor{blue}{blue}: speedup factor).
}
\label{tab:ar_comparison}
\end{table}
Training autoregressive video models on fewer frames can significantly reduce the overall training time. 
To investigate whether models trained on shorter sequences can still generate videos with longer frame sequences, we train a Fewer-Frames model on 2-frame blocks, while the baseline is trained on 5-frame blocks (17 frames). 
In inference, the Fewer-Frames model employs an iterative approach. After each frame block is generated, it is re-input to the model to predict the subsequent frame block, and this process continues until five frame blocks are obtained.
More details can be found in Appendix~\ref{app:details}.
We evaluated the performance of both models on the FaceForensics (FFS \citep{FFS}) and SkyTimelapse (SKY \citep{SKY}) datasets, using the Fréchet Video Distance (FVD \citep{FVD}) as the evaluation metric.

\begin{figure}[t]
\centering
\includegraphics[width=0.485\linewidth]{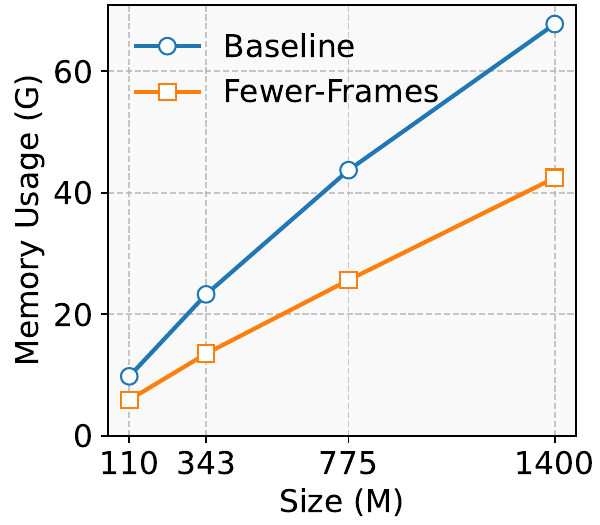}
\includegraphics[width=0.485\linewidth]{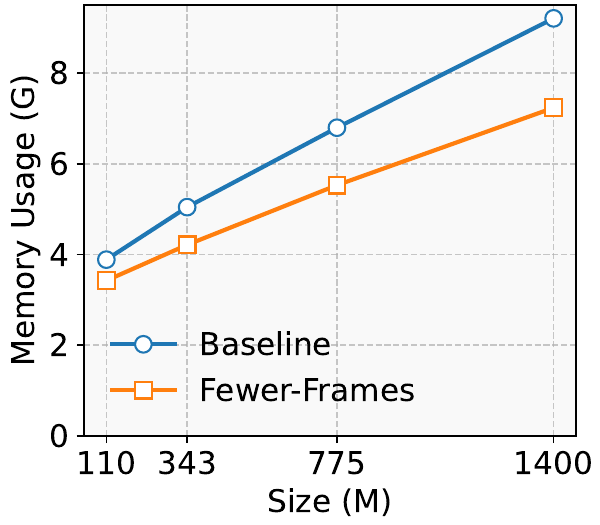}
\caption{\small 
Comparison of GPU memory usage for Baseline and Fewer-Frames with different model sizes during training ({\bf Left}) and inference ({\bf Right}), measured on a single A100 GPU with 2 videos.
}
\label{fig:ar_gpu}
\end{figure}

From Table \ref{tab:ar_comparison}, we show the performance comparison between the Baseline and Fewer-Frames methods on the FFS and SKY datasets. 
The Fewer-Frames method significantly accelerates training due to the reduced number of frames used during the training process. However, this benefit comes at the cost of lower video quality. The lower video quality is primarily attributed to the iterative approach, which requires repeatedly reloading the initial frame chunks during inference. It also impacts the model's ability to maintain consistency across the full video sequence. 
However, the Fewer-Frames shows a significant advantage in memory usage. In Figure \ref{fig:ar_gpu}, it uses considerably less GPU memory during both training and inference compared to the Baseline.

\subsection{Inconsistency on Fewer-Frames Model}
The primary drawback of the Fewer-Frames model emerges during inference, where its iterative, block-by-block generation process leads to a compounding of errors and causes temporal inconsistencies. To understand why, we analyze the difference in its autoregressive conditioning compared to the Baseline.

Let the ground-truth sequence of token blocks be $\mathbf{T} = (\mathbf{T}_1, \dots, \mathbf{T}_K)$, where each block $\mathbf{T}_k$ represents a segment of the video.

\paragraph{Baseline Model.} The Baseline model generates the entire video in a single, continuous pass. The prediction of any given block $\mathbf{T}_k$ is conditioned on all previously generated blocks, providing a global context:
\begin{align}
P(\mathbf{T}_k | \hat{\mathbf{T}}_{<k}) = P(\mathbf{T}_k | \hat{\mathbf{T}}_1, \dots, \hat{\mathbf{T}}_{k-1})
\end{align}
This long-range conditioning is crucial for maintaining temporal consistency throughout the sequence.

\paragraph{Fewer-Frames Model.} In contrast, the Fewer-Frames model is trained on isolated, short sequences. During inference, it generates the video block by block, where the prediction of block $\mathbf{T}_k$ is conditioned \textit{only} on the immediately preceding block $\hat{\mathbf{T}}_{k-1}$:
\begin{align}
P(\mathbf{T}_k | \hat{\mathbf{T}}_{k-1})
\end{align}
This limited context is the root of the problem. Any generation error in block $\hat{\mathbf{T}}_{k-1}$ is not merely a part of the history; it becomes the \textit{entire} basis for generating the subsequent block $\mathbf{T}_k$. This creates a \textit{cascading error} effect, where deviations quickly accumulate and cause the generated sequence to drift significantly from the true data distribution.

This analysis leads to the following proposition, which formalizes the expected error accumulation.

\begin{proposition}
\label{prop:error_accumulation}
Let $\hat{\mathbf{T}}_{Base}$ and $\hat{\mathbf{T}}_{FF}$ be the sequences generated by the Baseline and Fewer-Frames models, respectively. The expected deviation of the Fewer-Frames model from the true sequence $\mathbf{T}$ is greater than or equal to that of the Baseline model:
\begin{align}
\mathbb{E}[\|\mathbf{T} - \hat{\mathbf{T}}_{FF}\|] \geq \mathbb{E}[\|\mathbf{T} - \hat{\mathbf{T}}_{Base}\|]
\end{align}
(The formal proof can be found in Appendix~\ref{app:the1}).
\end{proposition}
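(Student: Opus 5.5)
The plan is to read the proposition as a comparison between two \emph{optimal} generators that differ only in how much context they see. The inequality then follows from the fact that the Baseline's set of admissible generation policies contains the Fewer-Frames policies. I will state this idealization explicitly: both models have sufficient capacity and are trained to minimize the same expected deviation objective. This is the step I expect to be the main obstacle, because without some optimality or nestedness assumption the statement is false for arbitrary trained networks.

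\textbf{Step 1 (policy classes).} Write a generator as a sequence of (possibly randomized) maps $g_k$ producing $\hat{\mathbf{T}}_k = g_k(\mathbf{H}_k, \xi_k)$, where $\mathbf{H}_k$ is the context and $\xi_k$ is independent sampling noise. For the Baseline, $\mathbf{H}_k = \hat{\mathbf{T}}_{<k}$. For Fewer-Frames, $\mathbf{H}_k = \hat{\mathbf{T}}_{k-1}$, as in the conditioning displayed above. Let $\mathcal{G}_{Base}$ and $\mathcal{G}_{FF}$ denote the corresponding classes.

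\textbf{Step 2 (inclusion).} Every $g_k$ that depends only on $\hat{\mathbf{T}}_{k-1}$ is also a measurable function of $\hat{\mathbf{T}}_{<k}$, since it simply ignores $\hat{\mathbf{T}}_{<k-1}$. By induction on $k$, any Fewer-Frames rollout can therefore be realized exactly by some Baseline policy, with the same joint law of $(\mathbf{T}, \hat{\mathbf{T}})$. Hence $\mathcal{G}_{FF} \subseteq \mathcal{G}_{Base}$.

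\textbf{Step 3 (optimality).} Define the risk $R(g) = \mathbb{E}[\|\mathbf{T} - \hat{\mathbf{T}}^{g}\|]$, with the expectation taken over the data, the condition, and the sampling noise. The Baseline is the minimizer of $R$ over $\mathcal{G}_{Base}$, and the Fewer-Frames model is the minimizer over $\mathcal{G}_{FF}$. Step 2 then gives
\begin{align}
R(\hat g_{Base}) = \min_{g\in\mathcal{G}_{Base}} R(g) \le \min_{g\in\mathcal{G}_{FF}} R(g) = R(\hat g_{FF}),
\end{align}
which is exactly the claimed inequality.

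\textbf{Step 4 (per-block version).} If the paper's training is instead interpreted as per-block conditional prediction rather than global risk minimization, I would argue block by block. Conditioning on the larger context $\hat{\mathbf{T}}_{<k}$ cannot raise the minimal conditional expected error relative to conditioning on $\hat{\mathbf{T}}_{k-1}$ alone, because $\sigma(\hat{\mathbf{T}}_{k-1}) \subseteq \sigma(\hat{\mathbf{T}}_{<k})$ and we take an infimum over a larger function class. I would then use the triangle inequality to sum over blocks, giving $\|\mathbf{T} - \hat{\mathbf{T}}\| \le \sum_k \|\mathbf{T}_k - \hat{\mathbf{T}}_k\|$.

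This per-block route is more delicate. The two models' histories differ, so a block-wise comparison needs a coupling or an induction showing that the Baseline's history is itself no worse. For that reason I would present the global-policy argument of Steps 1--3 as the main proof and remark on the per-block version only as intuition.
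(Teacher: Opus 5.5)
Your argument is correct under the idealization you state, and it takes a genuinely different route from the paper.

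The paper (Appendix~\ref{app:the1}) argues block by block. First, much as in your Step~4, it says that with a ground-truth history the Baseline's one-step error is no larger, because $\mathbf{T}_{<k}$ carries more information than $\mathbf{T}_{k-1}$. It then asserts, without derivation, that under exposure bias the added error satisfies $\mathbb{E}[\|\delta_k^{FF}\|] \ge \mathbb{E}[\|\delta_k^{Base}\|]$, and sums over $k$.

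Your nested-class argument sidesteps both weak points of that route. Because $R$ is evaluated on each model's own rollouts, exposure bias is built into the objective and needs no separate amplification claim. You also never pass from per-step bounds on $\mathbb{E}\|\delta_k\|$ to a bound on $\mathbb{E}\|\sum_k \delta_k\|$, which does not follow in general. The paper additionally evaluates both $\delta_k$ at the same $\mathbf{E}_{<k}$, which is precisely the coupling issue you flag in Step~4.

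The price is that everything rests on one assumption that does not describe how these models are trained and used, namely teacher-forced cross-entropy followed by sampling. Since $\hat{\mathbf{T}}$ is generated without seeing $\mathbf{T}$, a minimizer of $R$ can be taken to be a fixed (conditionally deterministic) sequence. In the unconditional setting your $k$-indexed class $\mathcal{G}_{FF}$ can emit that same sequence, so the two optimal risks coincide and the shorter context costs nothing at the optimum. The paper's route, though heuristic, at least tracks the mechanism (information advantage plus error propagation) that later motivates Local-Opt.\ and ReCo.

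Finally, Step~3 assumes more than it needs. Only the Baseline has to be optimal over $\mathcal{G}_{Base}$. The trained Fewer-Frames policy lies in $\mathcal{G}_{FF} \subseteq \mathcal{G}_{Base}$, so $R(\hat g_{Base}) \le R(\hat g_{FF})$ holds whether or not $\hat g_{FF}$ is optimal in its own class.
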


This theoretical result explains the empirical findings in Table~\ref{tab:ar_comparison}, where the Fewer-Frames model yields a substantially higher FVD score. The accumulated errors manifest as tangible visual artifacts, such as temporal inconsistencies and object distortion, degrading the overall video quality.

\begin{figure}[t]
\centering
\includegraphics[width=\linewidth]{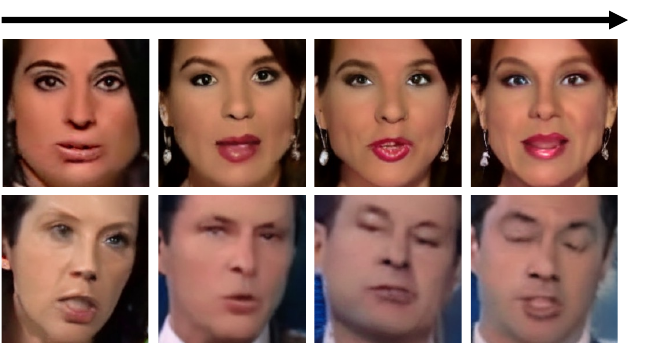}
\caption{\small Examples of video frames generated by the Fewer-Frames Model, showing frames 1, 5, 9, and 13 for two generated videos (top and bottom rows). As the frame number increases, error accumulation leads to noticeable inconsistencies in the appearance of the individuals within the video. Black arrow denotes video progression direction.}
\label{fig:incon_case}
\end{figure}

\paragraph{Empirical Analysis.}
As shown in Figure~\ref{fig:incon_case}, some videos generated by the Fewer-Frames model exhibit noticeable content inconsistencies (more cases in Appendix~\ref{appendix:cases}).
Specifically, as the video progresses, error accumulation becomes apparent, leading to pronounced inconsistencies in the appearance of individuals. 
To further analyze the consistency of the generated videos, we evaluate the Fewer-Frames model and the Baseline using PSNR and Optical Flow, as shown in Figure~\ref{fig:ar_psnr}. 
From Figure~\ref{fig:ar_psnr} (Left), the Fewer-Frames model produces lower PSNR values across varying source-target frame intervals compared to the Baseline. 
This highlights the reduced frame quality in videos generated by the Fewer-Frames model. 
Moreover, Figure~\ref{fig:ar_psnr} (Right) reveals that the Fewer-Frames model exhibits higher Optical Flow values, especially as the source-target frame interval increases. 
This indicates a greater inconsistency in motion across frames, aligning with the observation in Figure~\ref{fig:incon_case}.

\subsection{Training with Local Optimization}

To mitigate the error accumulation of the Fewer-Frames method while retaining its efficiency, we introduce \textit{Local Optimization} (Local-Opt.). It isolates the training objective to small, local windows of the video sequence, as shown in Figure~\ref{fig:local}.

Formally, given a full token sequence $\mathbf{E} = (\mathbf{e}_1, \dots, \mathbf{e}_N)$, each training step begins by sampling a random starting index $s$ to define a window of length $W$, denoted as $\mathbf{E}_{\mathcal{W}} = (\mathbf{e}_s, \dots, \mathbf{e}_{s+W-1})$. The training objective is to minimize the negative log-likelihood of the tokens \textit{only within this window}, conditioned on the preceding ground-truth sequence $\mathbf{E}_{<s} = (\mathbf{e}_1, \dots, \mathbf{e}_{s-1})$:
\begin{align}
\mathcal{L}_{\mathcal{W}}(\theta) = - \sum_{i=s}^{s+W-1} \log P(\mathbf{e}_i | \mathbf{E}_{<i}; \theta)
\label{eq:local_loss}
\end{align}
A crucial implementation detail is that the context $\mathbf{E}_{<s}$ is treated as a frozen constant. No gradients are propagated back through the representations of tokens outside the optimization window $\mathcal{W}$. This effectively uses a stop-gradient operation on the context, focusing the entire optimization effort on learning local transitions correctly.

\begin{figure}[t]
\centering
\includegraphics[width=0.485\linewidth]{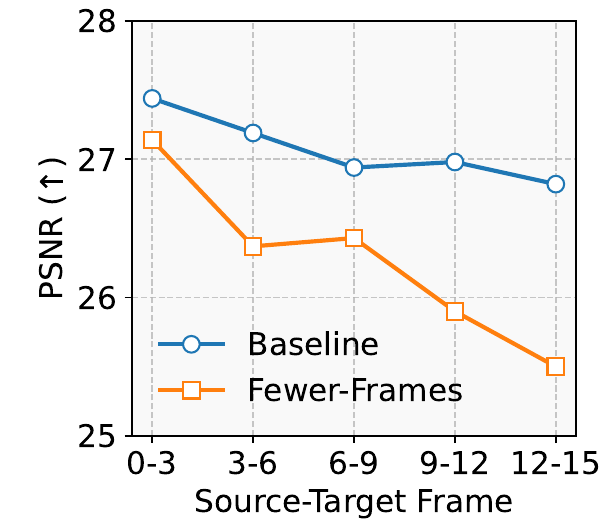}
\hfill
\includegraphics[width=0.485\linewidth]{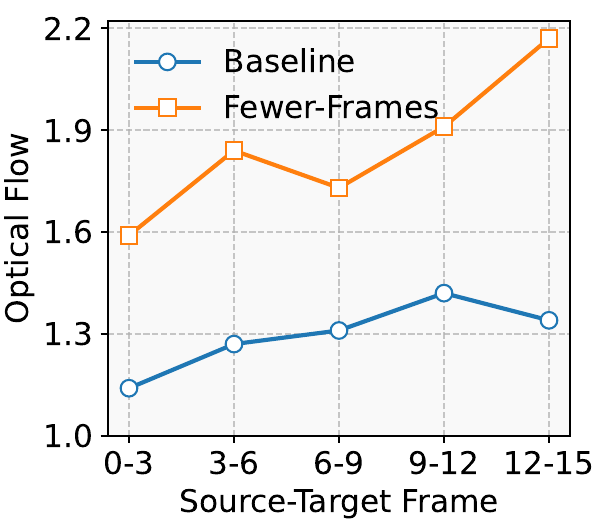}
\caption{\small Comparison of the Fewer-Frames Model with the Baseline using PSNR ({\bf Left}) and Optical Flow ({\bf Right}), measured across varying source-target frames. The left plot shows the PSNR values (higher is better), i.e., the quality of generated frames, while the right plot presents the Optical Flow values, i.e., the consistency of motion across frames.}
\label{fig:ar_psnr}
\end{figure}

This formulation provides two primary benefits. First, by always conditioning on an error-free ground-truth context, the model learns accurate predictions without the exposure bias inherent to iterative generation. Second, using a stride $S < W$ creates overlapping windows, meaning tokens are optimized multiple times under different contexts. This multi-context optimization enhances temporal consistency by forcing the model to learn more robust representations.
Importantly, Local-Opt. is a \textit{training-only} strategy. The inference procedure remains the standard, full-sequence autoregressive generation of the Baseline. This allows us to achieve significant training acceleration without compromising inference speed.

\begin{figure}[t]
    \centering
    \includegraphics[width=1\linewidth]{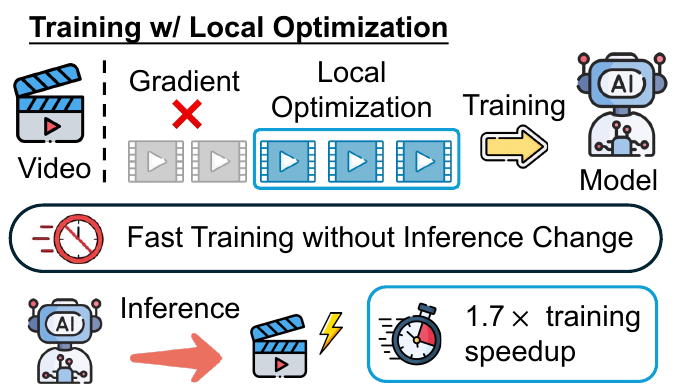}
    \caption{\small The Local Optimization (Local-Opt.) training strategy. During training, a window of frames is randomly selected for optimization (blue), while preceding frames serve as a frozen context (gray), preventing gradient flow. }
    \label{fig:local}
\end{figure}

\begin{proposition}\label{prop:local_opt_error}
Given the same history, the Local-Opt. model is explicitly trained to minimize prediction error within a window $\mathcal{W}$ via Eq.~\ref{eq:local_loss}. Consequently, its expected squared error within that window is lower than or equal to that of the Fewer-Frames model:
\begin{align}
\!\!\!\!\mathbb{E}[\|\mathbf{E}_{\mathcal{W}} - \hat{\mathbf{E}}_{LO,\mathcal{W}}\|^2] \leq \mathbb{E}[\|\mathbf{E}_{\mathcal{W}} - \hat{\mathbf{E}}_{FF,\mathcal{W}}\|^2]\!\!
\end{align}
(The formal proof and an explanation of the benefits of overlapping windows are provided in Appendix~\ref{app:the2}).
\end{proposition}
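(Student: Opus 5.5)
The plan is to treat the comparison as one between two estimators for the same windowed target $\mathbf{E}_{\mathcal{W}}$, each conditioned on the same history $\mathbf{E}_{<s}$. I will make the idealization used throughout this section explicit. The predictor $\hat{\mathbf{E}}_{\mathcal{W}}$ is a function of the context, and the quantity compared is the conditional risk $R(g)=\mathbb{E}[\|\mathbf{E}_{\mathcal{W}}-g(\mathbf{E}_{<s})\|^2]$. I will interpret the local objective Eq.~\ref{eq:local_loss} as a surrogate whose minimizer over a model class $\mathcal{F}$ is, at the level of expected loss, the $R$-minimizer among predictors using the full history $\mathbf{E}_{<s}$. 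For instance, with Gaussian or quadratic output heads the negative log-likelihood reduces to squared error. In the discrete case I would either restrict to bounded embeddings and invoke calibration of the cross-entropy, or simply assume the model attains the Bayes-optimal conditional law within $\mathcal{F}$.

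\textbf{Step 1: nesting of hypothesis classes.} The Fewer-Frames predictor for the window depends only on a truncated context, namely the immediately preceding block as in the Fewer-Frames conditioning rule, and at inference possibly on its own generated tokens. Any such predictor $g_{FF}(\mathbf{E}_{s-k:s-1})$ can be realized as a predictor of the full history that ignores all but the last block. Hence $\mathcal{F}_{FF}\subseteq\mathcal{F}_{LO}$ as sets of functions of $\mathbf{E}_{<s}$.

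\textbf{Step 2: optimality.} Since the Local-Opt. model minimizes $R$ over the larger class, $R(\hat g_{LO})=\min_{\mathcal{F}_{LO}}R\le\min_{\mathcal{F}_{FF}}R\le R(\hat g_{FF})$. More concretely, the Bayes predictor $\mathbb{E}[\mathbf{E}_{\mathcal{W}}\mid\mathbf{E}_{<s}]$ satisfies
\begin{align}
\mathbb{E}\|\mathbf{E}_{\mathcal{W}}-g(\mathbf{E}_{<s})\|^2=\mathbb{E}\|\mathbf{E}_{\mathcal{W}}-\mathbb{E}[\mathbf{E}_{\mathcal{W}}\mid\mathbf{E}_{<s}]\|^2+\mathbb{E}\|\mathbb{E}[\mathbf{E}_{\mathcal{W}}\mid\mathbf{E}_{<s}]-g\|^2 .
\end{align}
This is the Pythagorean decomposition. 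It gives the inequality directly, because conditioning on a sub-$\sigma$-algebra (the truncated context) cannot reduce the first term below its full-history value, by the tower property. I would write this out since it avoids any appeal to optimization success beyond attaining the conditional mean.

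\textbf{Step 3: the inference-time Fewer-Frames case.} At inference, Fewer-Frames additionally conditions on generated rather than ground-truth blocks. This only adds a further nonnegative mismatch term. I would handle it by conditioning on whatever the Fewer-Frames predictor sees, which is still measurable with respect to a noisier or smaller information set, and reapplying Step 2. The overlapping-window remark is not needed for the inequality. I would add only a brief comment that it averages the objective over $s$ so that the bound holds uniformly across window positions.

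The main obstacle is Step 2's bridge from the cross-entropy objective to squared error on token embeddings. Cross-entropy minimization yields the correct conditional distribution, not a squared-error minimizer. My first attempt would be to define $\hat{\mathbf{E}}$ as the mean embedding under the model distribution. Then if the Local-Opt. model recovers the true conditional law given $\mathbf{E}_{<s}$, its mean embedding is exactly the conditional expectation, and the decomposition above applies. If this is too strong, I would state the result under the assumption that both models achieve their respective population optima, which matches the ``given the same history'' phrasing of the proposition.
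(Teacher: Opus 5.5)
Your argument is correct under the idealizations you state, but it takes a different route from the paper. The paper works only at the level of the training objective. Since $\theta_{LO}$ is the argmin of $\mathcal{L}_{LO}$ over the shared parameter space, it has $\mathcal{L}_{LO}(\theta_{LO})\le\mathcal{L}_{LO}(\theta_{FF})$. It then asserts that, because negative log-likelihood is a standard surrogate for squared error, this loss inequality carries over to the squared-error inequality. In that argument the context asymmetry enters only through which objective each model was fit to, and $\theta_{FF}$ is scored with full ground-truth context rather than as it is deployed.

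You instead put the asymmetry in the information sets, and optimality enters through the $L^2$-optimality of the conditional mean. If Local-Opt.\ recovers the true conditional law given $\mathbf{E}_{<s}$ and its prediction is the mean embedding, the Pythagorean identity makes it beat every predictor measurable with respect to $\sigma(\mathbf{E}_{<s})$. It also beats predictors measurable with respect to that $\sigma$-algebra enlarged by sampling noise that is conditionally independent of $\mathbf{E}_{\mathcal{W}}$ given the history. That conditional independence is exactly what your Step~3 needs for the cross term to vanish, and it does hold for a model's own sampling noise.

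The paper's route asks only for exact minimization over the parameter class, but its last step is not an implication: a lower cross-entropy does not bound the squared error of any particular point prediction. Your route closes that gap at the cost of assuming realizability and of fixing $\hat{\mathbf{E}}_{LO,\mathcal{W}}$ as the mean embedding. That choice matters. A prediction sampled from the true law incurs twice the Bayes risk, so a low-variance Fewer-Frames predictor with nearly correct mean can beat it. The mode is not a squared-error minimizer either.

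One cleanup: the chain in Step~2 that begins ``since the Local-Opt.\ model minimizes $R$ over the larger class'' is not justified, because training minimizes cross-entropy, not $R$. Without realizability, the cross-entropy minimizer in $\mathcal{F}_{LO}$ need not minimize $R$ there. With realizability, the class nesting of Step~1 is superfluous, since the conditional mean beats all measurable functions of the history. Lead with the Bayes-predictor argument and drop the nesting chain.
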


\begin{figure}[t]
\centering
\includegraphics[width=0.34\linewidth]{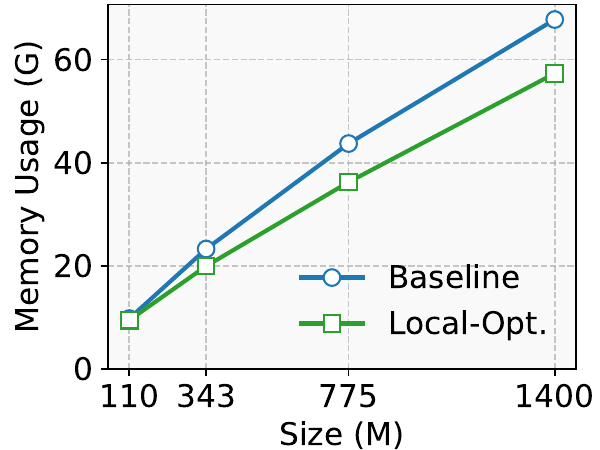}\hfill
\includegraphics[width=0.31\linewidth]{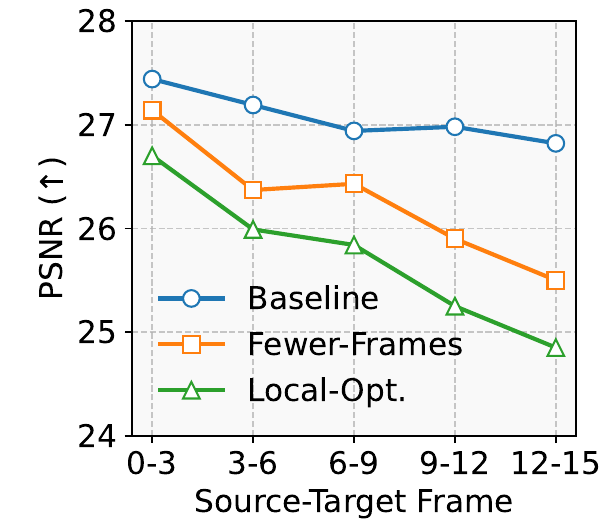}\hfill
\includegraphics[width=0.31\linewidth]{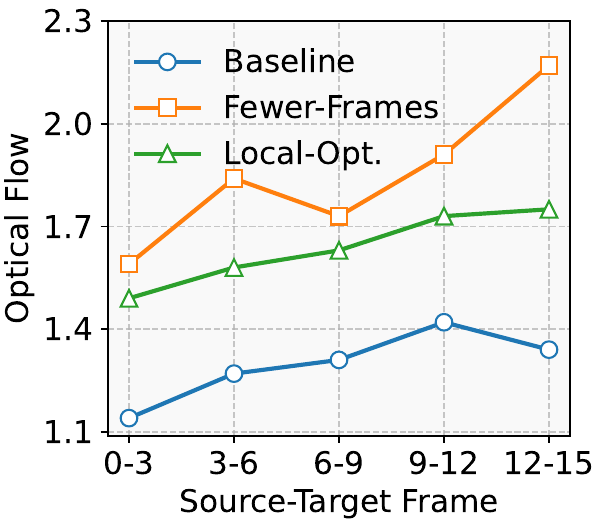}
\caption{\small 
({\bf Left}) Comparison of GPU memory usage during training for Baseline and Local-Opt. methods with different model sizes. 
The memory usage is measured on a single A100 GPU with a batch size of 2.
Comparison of the Local-Opt. and Fewer-Frames Model with the Baseline using PSNR ({\bf Middle}) and Optical Flow ({\bf Right}), measured across varying source-target frames. 
}
\label{fig:local_gpu}
\label{fig:local_psnr}
\end{figure}

\begin{table}[t]\small
\centering
\begin{tabular}{lcccc}
\toprule
\bf Method & \bf FFS & \bf SKY & \bf Train Speed ↑ \\
\midrule
Baseline & ~73.65 & ~89.09 &  0.84~~~~~~~~~~~~ \\
Fewer-Frames & 229.32 & 292.41 & 2.10 \textcolor{blue}{($\times$2.5)} \\
Local-Opt. & 190.46 & 256.94 & 1.47 \textcolor{blue}{($\times$1.7)} \\
\bottomrule
\end{tabular}
\caption{\small Performance and training speed comparison of methods on FFS and SKY datasets.}
\label{tab:lo_comparison}
\end{table}

\paragraph{Empirical Analysis.}  
Table~\ref{tab:lo_comparison} demonstrates that Local-Opt. achieves a 1.7 $\times$ training speedup over the Baseline. Additionally, Local-Opt. balances computational efficiency with improved accuracy, achieving higher FFS and SKY scores than the Fewer-Frames. 
Figure~\ref{fig:local_gpu} (Left) shows Local-Opt.'s memory efficiency, reducing GPU memory usage compared to the Baseline at larger model sizes.  
In Figure~\ref{fig:local_psnr} (Middle), Local-Opt. achieves slightly lower PSNR compared to Fewer-Frames. However, optical flow results from Figure~\ref{fig:local_psnr} (Right) show consistent performance across varying source-target frames, demonstrating Local-Opt.'s ability to maintain temporal consistency effectively.
By optimization on overlapping windows, Local-Opt. can mitigate the error propagation characteristic of the Fewer-Frames approach. The training algorithm is in Appendix~\ref{appendix:local_opt_algorithm}.

\begin{figure}[ht]
\centering
\includegraphics[width=\linewidth]{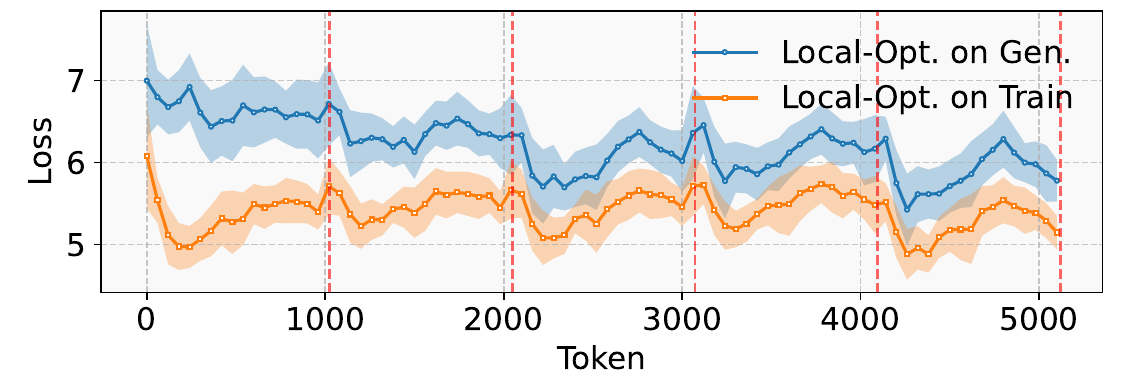}
\includegraphics[width=\linewidth]{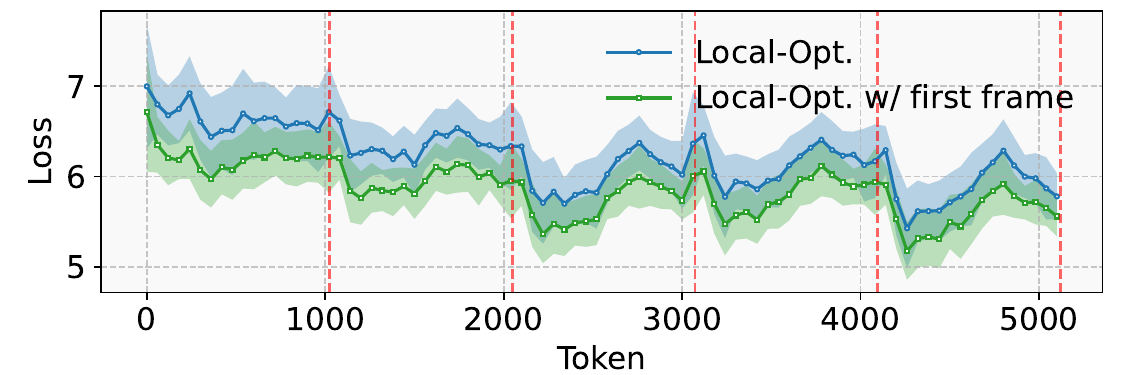}
\caption{\small Loss distribution of different optimization strategies on training and generated samples. ({\bf Top}) Comparison of loss distributions between ``Local-Opt.'' on its generated samples (``Local-Opt. on Gen.'') and its training samples (``Local-Opt. on Train''). ({\bf Bottom}) Comparison of loss distributions for ``Local-Opt.'' and ``Local-Opt. w/ first frame'', where the first frame of the video is provided as ground truth, on generated samples. The red dashed lines indicate frame blocks corresponding to the spatiotemporal compression of VQVAE. }
\label{fig:source_gen}
\end{figure}

\subsection{Imbalance on Local-Optimization Model}
\begin{figure}[t]
\centering
\includegraphics[width=\linewidth]{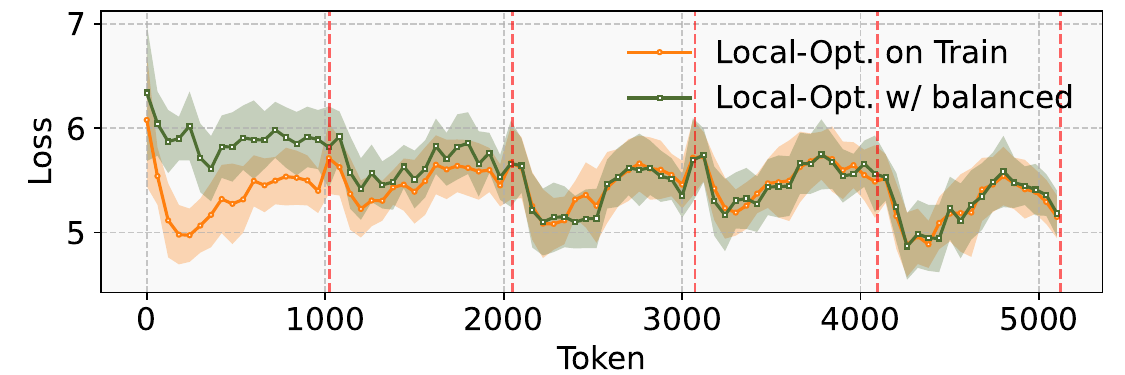}
\caption{\small Loss distribution comparison between ``Local-Opt. w/ balanced (first frame balanced)'' on generated samples, and "Local-Opt." on training samples. }
\label{fig:gpt_balanced}
\end{figure}
We observed a significant discrepancy in the loss distributions between the training data and the generated data of the Local-Opt. model. 
Specifically, as shown in Figure~\ref{fig:source_gen} (Top), the loss on the generated samples (``Local-Opt. on Gen.'') tends to be higher and more uneven compared to the loss on the training samples (``Local-Opt. on Train''). 
This imbalance reveals a limitation in the Local-Opt. model's ability to generalize effectively across training and generation phases.  
To mitigate this issue, we introduced a variant, i.e., ``Local-Opt. w/ first frame'', in which the first frame of the generated video is provided from ground truth during inference. 
As shown in Figure~\ref{fig:source_gen} (Bottom), this approach significantly reduces the loss across most token indices in the generated data, achieving a more stable and lower distribution.

\begin{table}[t]\small
\centering
\setlength{\tabcolsep}{3.5pt}
\resizebox{\linewidth}{!}{
\begin{tabular}{lccc}
\toprule
\textbf{Method}                     & \textbf{FFS}  & \textbf{SKY}    & \textbf{Train Speed ↑} \\ 
\midrule
Baseline                            & ~73.65         & ~89.09           & 0.84~~~~~~~~~~~~ \\ 
Local-Opt.                          & 190.46        & 256.94          & 1.47 \textcolor{blue}{($\times$1.7)} \\ 
Local-Opt. (\textit{w/ first frame})            & 134.73        & 186.63          & 1.47 \textcolor{blue}{($\times$1.7)} \\ 
Local-Opt. (\textit{w/ balanced})               & 127.11        & 179.84          & 1.68 \textcolor{blue}{($\times$2.0)} \\ 
\bottomrule
\end{tabular}}
\caption{\small Performance and training speed comparison of Local-Opt. and its variants with the Baseline on the FFS and SKY datasets.}
\label{tab:balance_comparison}
\end{table}

The FVD scores in Table~\ref{tab:balance_comparison} under ``Local-Opt. (first frame augmented)'' confirm this improvement.
Therefore, we adjusted the sampling strategy for the training process. 
Specifically, we increased the sampling proportion of the window containing the first frame to 0.5, creating a balanced training variant termed ``Local-Opt. w/ balanced''. 
As shown in Figure~\ref{fig:gpt_balanced}, this adjustment resulted in a further alignment of the loss distributions between the training and generated data. 
While the loss for the first frame remains slightly higher, the losses for subsequent frames exhibit significantly improved consistency. This suggests that the rebalancing strategy effectively reduces the generalization gap. The FVD and training speed for ``Local-Opt. w/ balanced'', as shown in Table~\ref{tab:balance_comparison}, shows a significant performance gain compared to both the baseline and the original Local-Opt. model.

\section{Representation Continuity}

While Local-Opt. accelerates training, its focus on independent windows can permit abrupt transitions in the learned representation space, limiting overall video consistency. To address this, we propose \textit{Representation Continuity} (ReCo), a regularization strategy designed to enforce temporal smoothness on the model's internal dynamics. Our approach is inspired by the principle of \textit{Lipschitz continuity}.

\subsection{Autoregressive Models as Dynamical Systems}

An autoregressive model can be viewed as a discrete-time dynamical system. Let $\mathbf{h}_{t-1}$ be the hidden state representation summarizing the history up to time $t-1$, and $\mathbf{e}_t$ be the embedding of the current input token. The model computes the next hidden state $\mathbf{h}_t$ via a state transition function $g$:
\begin{align}
\mathbf{h}_t = g(\mathbf{h}_{t-1}, \mathbf{e}_t; \theta)
\end{align}
The function $g$ is a complex, non-linear function parameterized by the model's weights $\theta$. The stability and smoothness of the generated sequence are fundamentally governed by the properties of this transition function.

For video generation, we desire a system where the sequence of hidden states $(\mathbf{h}_1, \mathbf{h}_2, \dots)$ evolves smoothly over time. A desirable property for $g$ is to be Lipschitz continuous with respect to its recurrent input $\mathbf{h}_{t-1}$.

\begin{definition}[Lipschitz Continuity]
\label{def:lipschitz}
A function $g(\mathbf{h}, \cdot)$ is L-Lipschitz continuous with respect to $\mathbf{h}$ if there exists a constant $L \ge 0$ such that for any two states $\mathbf{h}_a, \mathbf{h}_b$, the following holds:
\begin{align}
\| g(\mathbf{h}_a, \mathbf{e}) - g(\mathbf{h}_b, \mathbf{e}) \|_2 \leq L \cdot \| \mathbf{h}_a - \mathbf{h}_b \|_2
\end{align}
A small Lipschitz constant $L$ ensures that small perturbations in the hidden state do not get amplified, leading to a more stable system.
\end{definition}

\subsection{The ReCo Loss as a Lipschitz Regularizer}
Directly computing or constraining the global Lipschitz constant $L$ of a deep neural network is computationally intractable. Instead, we propose a practical proxy: a \textit{representation continuity loss} ($\mathcal{L}_{ReCo}$) that encourages local smoothness by penalizing large variations in the hidden state across consecutive time steps.

For a window $\mathcal{W}$, let $\mathbf{H}_{\mathcal{W}} = (\mathbf{h}_s, \dots, \mathbf{h}_{s+W-1})$ be the sequence of hidden representations. We define $\mathcal{L}_{ReCo}$ as the mean squared distance between adjacent hidden states:
\begin{align}
\label{eq:reco_loss}
\!\!\!\mathcal{L}_{ReCo}(\mathbf{H}_{\mathcal{W}}) = \frac{1}{W-1} \!\!\sum_{i=s}^{s+W-2} \!\!\| \mathbf{h}_{i+1} - \mathbf{h}_{i} \|^2_2
\end{align}
By minimizing this term, we implicitly encourage the transition function $g$ to produce outputs $\mathbf{h}_{i+1}$ that are close to its inputs $\mathbf{h}_i$, which is analogous to encouraging a small local Lipschitz constant. The total loss for the window is a weighted sum of the autoregressive cross-entropy loss $\mathcal{L}_{CE}$ and our continuity regularizer:
\begin{align}
\label{eq:total_loss}
\!\!\!\mathcal{L}_{Total} = \mathcal{L}_{CE}(\mathbf{E}_{\mathcal{W}}, \hat{\mathbf{E}}_{\mathcal{W}}) + \lambda \!\cdot\! \mathcal{L}_{ReCo}(\mathbf{H}_{\mathcal{W}})\!
\end{align}
where $\lambda$ balances prediction accuracy with representation smoothness.

\subsection{Theoretical Justification}
The primary benefit of enforcing representation continuity is the reduction of error accumulation during inference. Consider the generation process where the model is fed its own, potentially erroneous, predictions. Let $\mathbf{h}_t$ be the state generated from a ground-truth history, and $\hat{\mathbf{h}}_t$ be the state from a generated history. The error at step $t$ is $\boldsymbol{\epsilon}_t = \hat{\mathbf{h}}_t - \mathbf{h}_t$. The error propagates as follows:
\begin{align}
\boldsymbol{\epsilon}_{t+1} = g(\hat{\mathbf{h}}_t, \hat{\mathbf{e}}_{t+1}) - g(\mathbf{h}_t, \mathbf{e}_{t+1})
\end{align}
If $g$ has a small Lipschitz constant $L$, the growth of the error is bounded:
\begin{align}
\| \boldsymbol{\epsilon}_{t+1} \| \le L \cdot \| \boldsymbol{\epsilon}_t \| + \delta_t
\end{align}
where $\delta_t$ represents the new error introduced by predicting token $\hat{\mathbf{e}}_{t+1}$. By regularizing the model with $\mathcal{L}_{ReCo}$, we effectively encourage a smaller $L$, thus suppressing the exponential amplification of errors and improving the stability of long-sequence generation. Smoother latent dynamics also translate to more consistent visual outputs from the decoder, which can be verified by metrics like Optical Flow.

\begin{proposition}\label{prop:reco_improves_consistency}
By regularizing the state transition function via the Representation Continuity loss (Eq.~\ref{eq:reco_loss}), the ReCo model learns smoother latent dynamics. This theoretically bounds error propagation during inference and is expected to yield generated videos with higher temporal consistency compared to the standard Local-Opt. model.
\end{proposition}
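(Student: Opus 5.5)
The proof proceeds in three steps. First, unroll the error recursion to obtain a bound on the accumulated error. Second, relate the ReCo objective to the effective Lipschitz constant $L$ of $g$. Third, pass from latent error to a temporal-consistency proxy. Because the statement is informal ("smoother latent dynamics", "expected to yield"), I will make it precise by comparing error bounds under a modelling assumption, not by proving an unconditional inequality.

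\textbf{Step 1: unrolling the recursion.} Start from the propagation inequality $\|\boldsymbol{\epsilon}_{t+1}\| \le L\|\boldsymbol{\epsilon}_t\| + \delta_t$ stated above. By induction on $t$,
\begin{align}
\|\boldsymbol{\epsilon}_{T}\| \le L^{T-1}\|\boldsymbol{\epsilon}_1\| + \sum_{t=1}^{T-1} L^{T-1-t}\delta_t \le L^{T-1}\|\boldsymbol{\epsilon}_1\| + \bar\delta\,\frac{1-L^{T-1}}{1-L},
\end{align}
where $\bar\delta=\max_t\delta_t$. This bound is increasing in $L$. For $L<1$ it stays below $\bar\delta/(1-L)$, while for $L>1$ it grows exponentially in $T$. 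Hence any model with a smaller effective $L$, and no larger per-step error $\bar\delta$, admits a smaller bound on the accumulated error.

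\textbf{Step 2: linking $\mathcal{L}_{ReCo}$ to $L$.} This is the main obstacle. The loss constrains only $\|\mathbf{h}_{i+1}-\mathbf{h}_i\|$ along data trajectories, not the Jacobian $\partial g/\partial \mathbf{h}$. My first attempt uses a local linearization, $g(\mathbf{h},\mathbf{e}) \approx \mathbf{h} + \Delta(\mathbf{h},\mathbf{e})$, and assumes the residual $\Delta$ is locally Lipschitz with a constant proportional to its magnitude on the data manifold; this holds, for example, if $\Delta$ is $\kappa$-smooth in the sense $\|\nabla_{\mathbf{h}}\Delta\|\le \kappa\,\sup\|\Delta\|$ on a neighbourhood of the data. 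Under that assumption, penalizing $\mathbb{E}\|\Delta\|^2 = \mathcal{L}_{ReCo}$ controls the effective constant through
\begin{align}
L_{\mathrm{eff}} \le 1 + \kappa\sqrt{\mathcal{L}_{ReCo}}.
\end{align}
So a ReCo-trained model with $\mathcal{L}_{ReCo}^{\mathrm{ReCo}}\le \mathcal{L}_{ReCo}^{\mathrm{LO}}$ has a smaller bound on $L_{\mathrm{eff}}$ than Local-Opt. I also need the cross-entropy term to keep $\bar\delta$ comparable. For this I argue that, for small $\lambda$, the minimizer of Eq.~\ref{eq:total_loss} stays within $O(\lambda)$ of the Local-Opt. minimizer in $\mathcal{L}_{CE}$, by a standard perturbation argument on the penalized objective, so that $\bar\delta$ changes only to first order while $L_{\mathrm{eff}}$ decreases. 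If the linearization assumption turns out too strong, the fallback is to state the result directly as a comparison of the Step 1 bounds conditional on $L_{\mathrm{ReCo}}\le L_{\mathrm{LO}}$. This is what the proposition's wording supports.

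\textbf{Step 3: consistency.} Assume the decoder (and the output head) is $L_d$-Lipschitz. Then the frame-to-frame difference satisfies
\begin{align}
\|\hat{\mathbf{v}}_{t+1}-\hat{\mathbf{v}}_t\| \le L_d\left(\|\mathbf{h}_{t+1}-\mathbf{h}_t\| + \|\boldsymbol{\epsilon}_{t+1}\|+\|\boldsymbol{\epsilon}_t\|\right).
\end{align}
The first term is controlled directly by $\mathcal{L}_{ReCo}$ and the last two by Step 1. Since optical-flow magnitude is monotone in such differences, this yields the claimed improvement in temporal consistency over Local-Opt.
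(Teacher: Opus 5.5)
Your outline follows the paper's own justification, which is not a separate proof but the paragraph just before the proposition. That paragraph consists of the recursion $\|\boldsymbol{\epsilon}_{t+1}\|\le L\|\boldsymbol{\epsilon}_t\|+\delta_t$, the bare assertion that $\mathcal{L}_{ReCo}$ ``effectively encourages a smaller $L$'', and one sentence saying smoother latents give more consistent decoded frames. Your Step~1 unrolling is correct. So is your fallback, which compares the unrolled bounds assuming $L_{\mathrm{ReCo}}\le L_{\mathrm{LO}}$ and no larger per-step error; that is exactly the paper's argument made explicit. The parts where you try to go beyond it do not hold as written.

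In Step~2, your smoothness assumption gives $L_{\mathrm{eff}}\le 1+\kappa\sup_N\|\Delta\|$, with the supremum over a neighbourhood $N$ of the data. You then replace $\sup_N\|\Delta\|$ by $\sqrt{\mathcal{L}_{ReCo}}$, a root-mean-square along data trajectories. Since $\sqrt{\mathbb{E}\|\Delta\|^2}\le\sup\|\Delta\|$, that substitution runs the wrong way. A small mean-square penalty permits rare large residuals. It also says nothing about $\Delta$ off the data, which is where $\hat{\mathbf{h}}_t$ lives once errors occur. The comparison further assumes a single $\kappa$ for both trained networks, which nothing guarantees.

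Even granting your inequality, it never certifies $L<1$. Driving $\mathbf{h}_{i+1}-\mathbf{h}_i$ to zero pushes $g$ toward the identity in $\mathbf{h}$, so the best you get is linear growth $\|\boldsymbol{\epsilon}_1\|+(T-1)\bar\delta$, not the $\bar\delta/(1-L)$ regime.

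On the cross-entropy side you do not need a perturbation argument. Take exact minimizers $\theta_\lambda$ of Eq.~\ref{eq:total_loss} and $\theta_0$ of the Local-Opt. loss, and let $R$ be the continuity term. Comparing objective values gives $R(\theta_\lambda)\le R(\theta_0)$ and $\mathcal{L}_{CE}(\theta_\lambda)-\mathcal{L}_{CE}(\theta_0)\le\lambda\,(R(\theta_0)-R(\theta_\lambda))$. But this controls $\mathcal{L}_{CE}$, not $\bar\delta$. The quantity $\bar\delta$ is a worst case over steps of a self-generated trajectory, while $\mathcal{L}_{CE}$ is an average under teacher forcing. That link is exactly what your sign comparison of the two unrolled bounds requires, and it is missing.

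Step~3 fails structurally. $\hat{\mathbf{v}}_t$ is not a Lipschitz function of $\hat{\mathbf{h}}_t$, and not even a function of it alone. Between them sit the argmax or sampling over the codebook, which is discontinuous, and a VQ-VAE decoder acting jointly on whole frame blocks.

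The index in Eq.~\ref{eq:reco_loss} also runs over token positions. The paper's 17-frame clips are 5120 tokens, i.e. 1024 per frame block. So $\|\mathbf{h}_{t+1}-\mathbf{h}_t\|$ mostly compares spatially adjacent tokens inside one frame, while temporally corresponding tokens are 1024 positions apart. Your displayed inequality uses one index for both hidden states and frames and is therefore not well-posed. Nor does $\mathcal{L}_{ReCo}$ ``directly'' control frame-to-frame differences.

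The last link is false in general as well. Optical-flow magnitude is not monotone in a frame-difference norm: a global brightness change gives a large difference and no motion.

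What you can actually establish is the conditional statement in your fallback. The Step~1 bound is smaller for ReCo whenever its effective Lipschitz constant and per-step error are no larger than Local-Opt.'s. The link from $\mathcal{L}_{ReCo}$ to those quantities, and the passage to temporal consistency, remain modelling assumptions. That is also where the paper leaves them.
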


\section{Experiments}
\subsection{Experimental Setting}\label{sec:setting}
In this study, we evaluate our model using four video generation datasets: FFS \citep{FFS}, SKY \citep{SKY}, UCF101 (UCF \citep{UCF}), and Taichi-HD (Taichi \citep{Taichi}). The model performance is reported using FVD \citep{FVD}. We train and evaluate the models on 17-frame, 256$\times$256 resolution videos. We utilize OmniTokenizer \citep{OmniTokenizer} to transform the videos into tokens. For both the baseline and our proposed models, we employ two different parameter sizes, 110M and 343M, denoted as Baseline$^\star$ and $\ours$$^\star$, respectively. The training process for both models spans 300 epochs, with a learning rate of $1 \times 10^{-4}$. $\gamma$ and $\lambda$ are 0.01 and 0.1, respectively. The batch sizes are set to 96 for the 110M parameter size and 40 for the 343M parameter size. All experiments are conducted on a cluster of four NVIDIA A100 GPUs. To provide a comprehensive comparison, we evaluate our models against three types of video generation models: GAN-based, Diffusion-based, and Autoregressive.

\subsection{Results and Discussion}

As shown in Table~\ref{tab:performance_comparison}, we compare our method with state-of-the-art video generation models, including GAN-, diffusion-, and autoregressive-based approaches. Our models consistently demonstrate the effectiveness of the proposed training strategy.
Both the base model ($\ours$) and its larger variant ($\ours^\bigstar$) outperform their respective baselines across all four datasets, confirming the benefits of local optimization and representation continuity. In particular, $\ours^\bigstar$ achieves better FVD scores on multiple datasets, scoring 42.5 on FFS, 58.8 on SKY, and 98.3 on Taichi.
Moreover, our approach is complementary to architectural advancements. When combined with the LARP tokenizer~\cite{wang2025larp}, the enhanced model ($\ours^\spadesuit$) achieves new best results on FFS (46.2) and UCF (56.1), surpassing the original LARP model. These results demonstrate that our training strategy is both effective and scalable, providing a strong and versatile foundation for autoregressive video generation.

\subsection{Text-to-Video Generation}
\label{sec:text2video}
\begin{table}[t]\small
\centering
\setlength{\tabcolsep}{2pt}
\resizebox{\linewidth}{!}{
\begin{tabular}{lcccc}
\toprule
\bf Method        & \bf FFS   & \bf SKY   & \bf UCF   & \bf Taichi \\
\hline\midrule
\multicolumn{5}{c}{\textit{GAN-based Video Generation Model}} \\\midrule
MoCoGAN \citep{MoCoGAN}      & 124.7  & 206.6  & 2886.9    & -         \\
MoCoGAN-HD \citep{MoCoGAN}    & 111.8  & 164.1  & 1729.6    & 128.1     \\
DIGAN \citep{DIGAN}        & ~~62.5 & ~~83.1 & 1630.2    & 156.7     \\
\hline\midrule
\multicolumn{5}{c}{\textit{Diffusion-based Video Generation Model}} \\\midrule
PVDM \citep{PVDM}         & 355.9  & ~~75.5 & 1141.9    & 540.2     \\
LVDM  \citep{LVDM}        & -      & ~~95.2 & ~~372.0   & ~~99.0    \\
\hline\midrule
\multicolumn{5}{c}{\textit{Autoregressive Video Generation Model}} \\\midrule
VideoGPT \citep{VideoGPT}     & 185.9  & 222.7  & 2880.6    & -         \\
Baseline \citep{Loong} & ~~73.7 & ~~89.1 & ~~630.8   & 115.5    \\
Baseline$^\bigstar$ \citep{Loong}  & ~~46.1  & ~~62.7 & ~~254.5 & 105.5   \\
\rowcolor{cyan!15}$\ours$ (Ours)      & ~~72.6 & ~~87.5 & ~~590.3   & 104.3    \\
\rowcolor{cyan!15}$\ours$$^\bigstar$ (Ours)      & \bf ~~42.5 & \bf ~~58.8 & \bf ~~251.4   & \bf ~~98.3   \\
\midrule
LARP \cite{wang2025larp} & ~~62.6 & ~~70.4 & ~~57.0 & 119.5 \\
\rowcolor{cyan!15}$\ours$$^\spadesuit$ (Ours) & \bf ~~46.2 & \bf ~~61.7 & \bf ~~56.1& \bf 104.6 \\
\bottomrule
\end{tabular}}
\caption{\small 
Performance comparison of video generation models across different datasets. $\bigstar$ denotes models with larger parameter sizes (770M). $\spadesuit$ denotes video model trained with LARP tokenizer~\cite{wang2025larp}.
}
\label{tab:performance_comparison}
\end{table}
\begin{table}[t]\small
\centering
\caption{Text-to-video generation results on MSR-VTT (zero-shot).}
\label{tab:text2video}
\setlength{\tabcolsep}{2.5pt}
\resizebox{\linewidth}{!}{
\begin{tabular}{lcccc}
\toprule
Method & Params & CLIP $\uparrow$ & FVD $\downarrow$ & Training Cost \\
\midrule
Loong \citep{Loong} & 7B & 0.2903 & 274 & 1.0 \\
\rowcolor{cyan!15}ReCo (Ours) & 7B & \textbf{0.3056} & \textbf{212} & 1.0 \\
\rowcolor{cyan!15}ReCo* (Ours) & 7B & 0.2911 & 267 & \textbf{0.5} \\
\bottomrule
\end{tabular}}
\end{table}
To further demonstrate the relevance of our method to NLP and multimodal generation, we evaluate ReCo on a text-to-video generation task.
We train the model on the Vimeo dataset containing approximately 300K video-text pairs and conduct zero-shot evaluation on MSR-VTT.
All methods are based on the Loong architecture with 7B parameters.
We report CLIP Score to measure text--video semantic alignment and FVD to assess video quality.
ReCo consistently improves both semantic alignment and video quality over the baseline.
Notably, ReCo* achieves competitive performance while using only 50\% of the training cost, demonstrating that our approach scales favorably to large multimodal models and significantly improves training efficiency without sacrificing generation quality.

\begin{figure}[t]
\centering
\includegraphics[width=\linewidth]{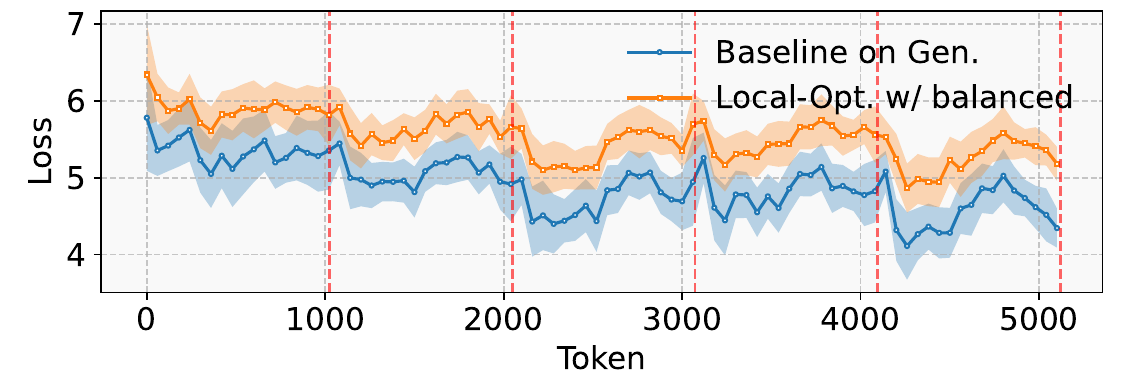}
\includegraphics[width=\linewidth]{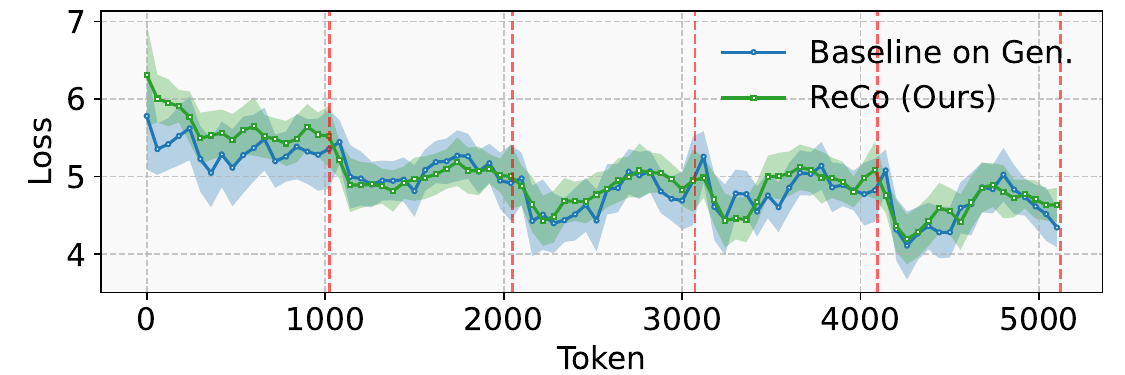}
\caption{\small Loss distribution comparison on generated samples. (Top) ``Baseline'' vs. ``Local-Opt. w/ balanced.'' (Bottom) ``Baseline'' vs. ``Ours''.}
\label{fig:gpt_ours}
\end{figure}

\paragraph{Loss Distribution Analysis.}
As shown in Figure~\ref{fig:gpt_ours}, we compare the loss distributions of generated samples across different methods. 
The top figure shows that the loss distribution of ``Local-Opt. w/ balanced'' consistently exceeds that of the ``Baseline'', indicating a persistent performance gap. 
In contrast, the bottom figure shows that ``ReCo (Ours)'' closely matches the ``Baseline'', except for a slightly higher loss at the first frame. 
Thereafter, ReCo demonstrates stable loss values with reduced fluctuation compared to the ``Baseline'', highlighting the robustness and stability of our approach.
Importantly, the incorporation of Representation Continuity (ReCo) into the ``Local-Opt. w/ balanced'' framework effectively mitigates the performance gap. It demonstrates that ReCo enhances the optimization process by promoting smoother transitions and more consistent representations across frames.

\begin{figure}[t]
\centering
\includegraphics[width=0.485\linewidth]{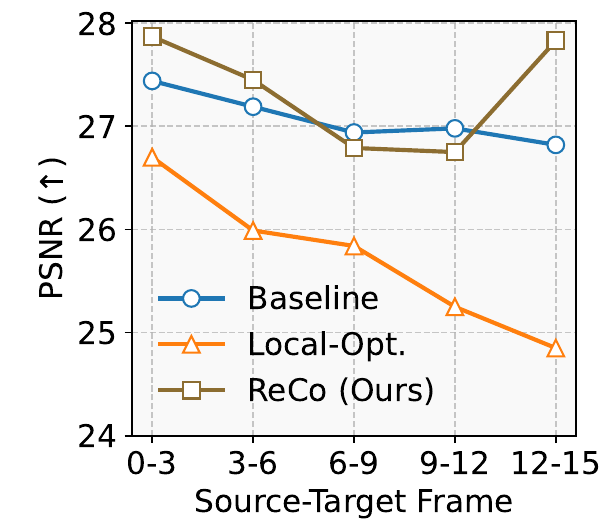}
\includegraphics[width=0.485\linewidth]{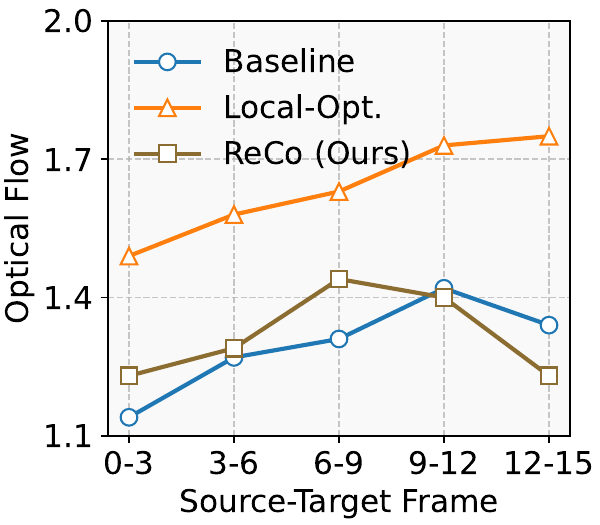}
\caption{\small 
Comparison of the Local-Opt. and Ours with the Baseline using PSNR ({\bf Left}) and Optical Flow ({\bf Right}), measured across varying source-target frames. }
\label{fig:ours_psnr}
\end{figure}

\paragraph{Video Consistency Analysis.}
We evaluate video consistency of our approach using PSNR and optical flow metrics (Figure~\ref{fig:ours_psnr}). ``ReCo (Ours)'' matches the  ``Baseline'' in PSNR, indicating similar perceptual quality, while surpassing ``Local-Opt.'' in optical flow, reflecting more realistic motion and frame coherence. Additionally, ReCo's training speed is on par with ``Local-Opt.'' and approximately 2 times faster than ``Baseline'', demonstrating both effectiveness and efficiency.

\section{Conclusion}
In this work, we explore methods to accelerate the training of autoregressive models for video generation. 
Through our empirical analysis, we found that while training with fewer frames can speed up the process, it leads to significant inconsistencies in the generated videos. 
To alleviate this, we propose the Local Optimization (Local Opt.) method, which not only mitigates error accumulation. 
Furthermore, by incorporating Representation Continuity (ReCo) training, our method can enhance video quality and consistency of  Local Opt., achieving substantial improvements in both training speed and overall performance. The consistency between our experimental findings and theoretical analysis further validates the effectiveness of our approach.

\section*{Limitations}\label{app:limitation}
This study mainly emphasizes algorithmic innovations rather than experiments on commercial-scale large language models. We believe, however, that the proposed methods are broadly applicable and intend to explore their potential at larger scales in future work.

\bibliography{ref}

@article{Emu3,
  author       = {Xinlong Wang and
                  Xiaosong Zhang and
                  Zhengxiong Luo and
                  Quan Sun and
                  Yufeng Cui and
                  Jinsheng Wang and
                  Fan Zhang and
                  Yueze Wang and
                  Zhen Li and
                  Qiying Yu and
                  Yingli Zhao and
                  Yulong Ao and
                  Xuebin Min and
                  Tao Li and
                  Boya Wu and
                  Bo Zhao and
                  Bowen Zhang and
                  Liangdong Wang and
                  Guang Liu and
                  Zheqi He and
                  Xi Yang and
                  Jingjing Liu and
                  Yonghua Lin and
                  Tiejun Huang and
                  Zhongyuan Wang},
  title        = {Emu3: Next-Token Prediction is All You Need},
  journal      = {CoRR},
  volume       = {abs/2409.18869},
  year         = {2024},
  url          = {https://doi.org/10.48550/arXiv.2409.18869},
  doi          = {10.48550/ARXIV.2409.18869},
  eprinttype    = {arXiv},
  eprint       = {2409.18869},
  bibsource    = {dblp computer science bibliography, https://dblp.org}
}

@article{MAR,
  author       = {Tianhong Li and
                  Yonglong Tian and
                  He Li and
                  Mingyang Deng and
                  Kaiming He},
  title        = {Autoregressive Image Generation without Vector Quantization},
  journal      = {CoRR},
  volume       = {abs/2406.11838},
  year         = {2024},
  url          = {https://doi.org/10.48550/arXiv.2406.11838},
  doi          = {10.48550/ARXIV.2406.11838},
  eprinttype    = {arXiv},
  eprint       = {2406.11838},
  bibsource    = {dblp computer science bibliography, https://dblp.org}
}

@article{LlamaGen,
  author       = {Peize Sun and
                  Yi Jiang and
                  Shoufa Chen and
                  Shilong Zhang and
                  Bingyue Peng and
                  Ping Luo and
                  Zehuan Yuan},
  title        = {Autoregressive Model Beats Diffusion: Llama for Scalable Image Generation},
  journal      = {CoRR},
  volume       = {abs/2406.06525},
  year         = {2024},
  url          = {https://doi.org/10.48550/arXiv.2406.06525},
  doi          = {10.48550/ARXIV.2406.06525},
  eprinttype    = {arXiv},
  eprint       = {2406.06525},
  bibsource    = {dblp computer science bibliography, https://dblp.org}
}

@misc{VRC,
  title={Less is more: Vision representation compression for efficient video generation with large language models},
  author={Zhou, Yucheng and Zhang, Jihai and Chen, Guanjie and Shen, Jianbing and Cheng, Yu},
  year={2024},
  publisher={OpenReview}
}

@article{Skip,
  author       = {Guanjie Chen and
                  Xinyu Zhao and
                  Yucheng Zhou and
                  Tianlong Chen and
                  Yu Cheng},
  title        = {Accelerating Vision Diffusion Transformers with Skip Branches},
  journal      = {CoRR},
  volume       = {abs/2411.17616},
  year         = {2024},
  url          = {https://doi.org/10.48550/arXiv.2411.17616},
  doi          = {10.48550/ARXIV.2411.17616},
  eprinttype    = {arXiv},
  eprint       = {2411.17616},
  bibsource    = {dblp computer science bibliography, https://dblp.org}
}

@article{OpenSora,
  title={Open-sora: Democratizing efficient video production for all},
  author={Zheng, Zangwei and Peng, Xiangyu and Yang, Tianji and Shen, Chenhui and Li, Shenggui and Liu, Hongxin and Zhou, Yukun and Li, Tianyi and You, Yang},
  journal={arXiv preprint arXiv:2412.20404},
  year={2024}
}

@article{Chameleon,
  author       = {Chameleon Team},
  title        = {Chameleon: Mixed-Modal Early-Fusion Foundation Models},
  journal      = {CoRR},
  volume       = {abs/2405.09818},
  year         = {2024},
  url          = {https://doi.org/10.48550/arXiv.2405.09818},
  doi          = {10.48550/ARXIV.2405.09818},
  eprinttype    = {arXiv},
  eprint       = {2405.09818},
  bibsource    = {dblp computer science bibliography, https://dblp.org}
}

@article{GPT2,
  title={Language models are unsupervised multitask learners},
  author={Radford, Alec and Wu, Jeffrey and Child, Rewon and Luan, David and Amodei, Dario and Sutskever, Ilya and others},
  journal={OpenAI blog},
  volume={1},
  number={8},
  pages={9},
  year={2019}
}

@article{FFS,
  author       = {Andreas R{\"{o}}ssler and
                  Davide Cozzolino and
                  Luisa Verdoliva and
                  Christian Riess and
                  Justus Thies and
                  Matthias Nie{\ss}ner},
  title        = {FaceForensics: {A} Large-scale Video Dataset for Forgery Detection
                  in Human Faces},
  journal      = {CoRR},
  volume       = {abs/1803.09179},
  year         = {2018},
  url          = {http://arxiv.org/abs/1803.09179},
  eprinttype    = {arXiv},
  eprint       = {1803.09179},
  bibsource    = {dblp computer science bibliography, https://dblp.org}
}

@article{OmniTokenizer,
  author       = {Junke Wang and
                  Yi Jiang and
                  Zehuan Yuan and
                  Bingyue Peng and
                  Zuxuan Wu and
                  Yu{-}Gang Jiang},
  title        = {OmniTokenizer: {A} Joint Image-Video Tokenizer for Visual Generation},
  journal      = {CoRR},
  volume       = {abs/2406.09399},
  year         = {2024},
  url          = {https://doi.org/10.48550/arXiv.2406.09399},
  doi          = {10.48550/ARXIV.2406.09399},
  eprinttype    = {arXiv},
  eprint       = {2406.09399},
  bibsource    = {dblp computer science bibliography, https://dblp.org}
}

@inproceedings{MoCoGAN,
  author       = {Sergey Tulyakov and
                  Ming{-}Yu Liu and
                  Xiaodong Yang and
                  Jan Kautz},
  title        = {MoCoGAN: Decomposing Motion and Content for Video Generation},
  booktitle    = {2018 {IEEE} Conference on Computer Vision and Pattern Recognition,
                  {CVPR} 2018, Salt Lake City, UT, USA, June 18-22, 2018},
  pages        = {1526--1535},
  publisher    = {Computer Vision Foundation / {IEEE} Computer Society},
  year         = {2018},
  url          = {http://openaccess.thecvf.com/content\_cvpr\_2018/html/Tulyakov\_MoCoGAN\_Decomposing\_Motion\_CVPR\_2018\_paper.html},
  doi          = {10.1109/CVPR.2018.00165},
  bibsource    = {dblp computer science bibliography, https://dblp.org}
}

@inproceedings{DIGAN,
  author       = {Sihyun Yu and
                  Jihoon Tack and
                  Sangwoo Mo and
                  Hyunsu Kim and
                  Junho Kim and
                  Jung{-}Woo Ha and
                  Jinwoo Shin},
  title        = {Generating Videos with Dynamics-aware Implicit Generative Adversarial
                  Networks},
  booktitle    = {The Tenth International Conference on Learning Representations, {ICLR}
                  2022, Virtual Event, April 25-29, 2022},
  publisher    = {OpenReview.net},
  year         = {2022},
  url          = {https://openreview.net/forum?id=Czsdv-S4-w9},
  bibsource    = {dblp computer science bibliography, https://dblp.org}
}

@inproceedings{PVDM,
  author       = {Sihyun Yu and
                  Kihyuk Sohn and
                  Subin Kim and
                  Jinwoo Shin},
  title        = {Video Probabilistic Diffusion Models in Projected Latent Space},
  booktitle    = {{IEEE/CVF} Conference on Computer Vision and Pattern Recognition,
                  {CVPR} 2023, Vancouver, BC, Canada, June 17-24, 2023},
  pages        = {18456--18466},
  publisher    = {{IEEE}},
  year         = {2023},
  url          = {https://doi.org/10.1109/CVPR52729.2023.01770},
  doi          = {10.1109/CVPR52729.2023.01770},
  bibsource    = {dblp computer science bibliography, https://dblp.org}
}

@article{LVDM,
  title={Latent video diffusion models for high-fidelity long video generation},
  author={He, Yingqing and Yang, Tianyu and Zhang, Yong and Shan, Ying and Chen, Qifeng},
  journal={arXiv preprint arXiv:2211.13221},
  year={2022}
}

@article{Videogpt,
  title={Videogpt: Video generation using vq-vae and transformers},
  author={Yan, Wilson and Zhang, Yunzhi and Abbeel, Pieter and Srinivas, Aravind},
  journal={arXiv preprint arXiv:2104.10157},
  year={2021}
}

@article{Loong,
  title={Loong: Generating minute-level long videos with autoregressive language models},
  author={Wang, Yuqing and Xiong, Tianwei and Zhou, Daquan and Lin, Zhijie and Zhao, Yang and Kang, Bingyi and Feng, Jiashi and Liu, Xihui},
  journal={arXiv preprint arXiv:2410.02757},
  year={2024}
}

@inproceedings{SKY,
  author       = {Wei Xiong and
                  Wenhan Luo and
                  Lin Ma and
                  Wei Liu and
                  Jiebo Luo},
  title        = {Learning to Generate Time-Lapse Videos Using Multi-Stage Dynamic Generative
                  Adversarial Networks},
  booktitle    = {2018 {IEEE} Conference on Computer Vision and Pattern Recognition,
                  {CVPR} 2018, Salt Lake City, UT, USA, June 18-22, 2018},
  pages        = {2364--2373},
  publisher    = {Computer Vision Foundation / {IEEE} Computer Society},
  year         = {2018},
  url          = {http://openaccess.thecvf.com/content\_cvpr\_2018/html/Xiong\_Learning\_to\_Generate\_CVPR\_2018\_paper.html},
  doi          = {10.1109/CVPR.2018.00251},
  bibsource    = {dblp computer science bibliography, https://dblp.org}
}

@article{UCF,
  title={UCF101: A dataset of 101 human actions classes from videos in the wild},
  author={Soomro, K},
  journal={arXiv preprint arXiv:1212.0402},
  year={2012}
}

@inproceedings{Taichi,
  author       = {Aliaksandr Siarohin and
                  St{\'{e}}phane Lathuili{\`{e}}re and
                  Sergey Tulyakov and
                  Elisa Ricci and
                  Nicu Sebe},
  editor       = {Hanna M. Wallach and
                  Hugo Larochelle and
                  Alina Beygelzimer and
                  Florence d'Alch{\'{e}}{-}Buc and
                  Emily B. Fox and
                  Roman Garnett},
  title        = {First Order Motion Model for Image Animation},
  booktitle    = {Advances in Neural Information Processing Systems 32: Annual Conference
                  on Neural Information Processing Systems 2019, NeurIPS 2019, December
                  8-14, 2019, Vancouver, BC, Canada},
  pages        = {7135--7145},
  year         = {2019},
  bibsource    = {dblp computer science bibliography, https://dblp.org}
}

@article{FVD,
  author       = {Thomas Unterthiner and
                  Sjoerd van Steenkiste and
                  Karol Kurach and
                  Rapha{\"{e}}l Marinier and
                  Marcin Michalski and
                  Sylvain Gelly},
  title        = {Towards Accurate Generative Models of Video: {A} New Metric {\&}
                  Challenges},
  journal      = {CoRR},
  volume       = {abs/1812.01717},
  year         = {2018},
  url          = {http://arxiv.org/abs/1812.01717},
  eprinttype    = {arXiv},
  eprint       = {1812.01717},
  bibsource    = {dblp computer science bibliography, https://dblp.org}
}

@article{PSNR,
  author       = {Zhou Wang and
                  Alan C. Bovik and
                  Hamid R. Sheikh and
                  Eero P. Simoncelli},
  title        = {Image quality assessment: from error visibility to structural similarity},
  journal      = {{IEEE} Trans. Image Process.},
  volume       = {13},
  number       = {4},
  pages        = {600--612},
  year         = {2004},
  url          = {https://doi.org/10.1109/TIP.2003.819861},
  doi          = {10.1109/TIP.2003.819861},
  bibsource    = {dblp computer science bibliography, https://dblp.org}
}

@inproceedings{OpticalFlow,
  author       = {Gunnar Farneb{\"{a}}ck},
  editor       = {Josef Big{\"{u}}n and
                  Tomas Gustavsson},
  title        = {Two-Frame Motion Estimation Based on Polynomial Expansion},
  booktitle    = {Image Analysis, 13th Scandinavian Conference, {SCIA} 2003, Halmstad,
                  Sweden, June 29 - July 2, 2003, Proceedings},
  series       = {Lecture Notes in Computer Science},
  volume       = {2749},
  pages        = {363--370},
  publisher    = {Springer},
  year         = {2003},
  url          = {https://doi.org/10.1007/3-540-45103-X\_50},
  doi          = {10.1007/3-540-45103-X\_50},
  bibsource    = {dblp computer science bibliography, https://dblp.org}
}

@article{confiner,
  title={Training-free Long Video Generation with Chain of Diffusion Model Experts},
  author={Li, Wenhao and Cao, Yichao and Su, Xiu and Lin, Xi and You, Shan and Zheng, Mingkai and Chen, Yi and Xu, Chang},
  journal={arXiv preprint arXiv:2408.13423},
  year={2024}
}

@article{videolcm,
  author       = {Xiang Wang and
                  Shiwei Zhang and
                  Han Zhang and
                  Yu Liu and
                  Yingya Zhang and
                  Changxin Gao and
                  Nong Sang},
  title        = {VideoLCM: Video Latent Consistency Model},
  journal      = {CoRR},
  volume       = {abs/2312.09109},
  year         = {2023},
  url          = {https://doi.org/10.48550/arXiv.2312.09109},
  doi          = {10.48550/ARXIV.2312.09109},
  eprinttype    = {arXiv},
  eprint       = {2312.09109},
  bibsource    = {dblp computer science bibliography, https://dblp.org}
}

@article{stablevideodiffusion,
  author       = {Andreas Blattmann and
                  Tim Dockhorn and
                  Sumith Kulal and
                  Daniel Mendelevitch and
                  Maciej Kilian and
                  Dominik Lorenz and
                  Yam Levi and
                  Zion English and
                  Vikram Voleti and
                  Adam Letts and
                  Varun Jampani and
                  Robin Rombach},
  title        = {Stable Video Diffusion: Scaling Latent Video Diffusion Models to Large
                  Datasets},
  journal      = {CoRR},
  volume       = {abs/2311.15127},
  year         = {2023},
  url          = {https://doi.org/10.48550/arXiv.2311.15127},
  doi          = {10.48550/ARXIV.2311.15127},
  eprinttype    = {arXiv},
  eprint       = {2311.15127},
  bibsource    = {dblp computer science bibliography, https://dblp.org}
}

@article{show1,
  author       = {David Junhao Zhang and
                  Jay Zhangjie Wu and
                  Jia{-}Wei Liu and
                  Rui Zhao and
                  Lingmin Ran and
                  Yuchao Gu and
                  Difei Gao and
                  Mike Zheng Shou},
  title        = {Show-1: Marrying Pixel and Latent Diffusion Models for Text-to-Video
                  Generation},
  journal      = {CoRR},
  volume       = {abs/2309.15818},
  year         = {2023},
  url          = {https://doi.org/10.48550/arXiv.2309.15818},
  doi          = {10.48550/ARXIV.2309.15818},
  eprinttype    = {arXiv},
  eprint       = {2309.15818},
  bibsource    = {dblp computer science bibliography, https://dblp.org}
}

@article{videofactory,
  author       = {Wenjing Wang and
                  Huan Yang and
                  Zixi Tuo and
                  Huiguo He and
                  Junchen Zhu and
                  Jianlong Fu and
                  Jiaying Liu},
  title        = {VideoFactory: Swap Attention in Spatiotemporal Diffusions for Text-to-Video
                  Generation},
  journal      = {CoRR},
  volume       = {abs/2305.10874},
  year         = {2023},
  url          = {https://doi.org/10.48550/arXiv.2305.10874},
  doi          = {10.48550/ARXIV.2305.10874},
  eprinttype    = {arXiv},
  eprint       = {2305.10874},
  bibsource    = {dblp computer science bibliography, https://dblp.org}
}

@article{latte,
  author       = {Xin Ma and
                  Yaohui Wang and
                  Gengyun Jia and
                  Xinyuan Chen and
                  Ziwei Liu and
                  Yuan{-}Fang Li and
                  Cunjian Chen and
                  Yu Qiao},
  title        = {Latte: Latent Diffusion Transformer for Video Generation},
  journal      = {CoRR},
  volume       = {abs/2401.03048},
  year         = {2024},
  url          = {https://doi.org/10.48550/arXiv.2401.03048},
  doi          = {10.48550/ARXIV.2401.03048},
  eprinttype    = {arXiv},
  eprint       = {2401.03048},
  bibsource    = {dblp computer science bibliography, https://dblp.org}
}

@inproceedings{cogvideo,
  author       = {Wenyi Hong and
                  Ming Ding and
                  Wendi Zheng and
                  Xinghan Liu and
                  Jie Tang},
  title        = {CogVideo: Large-scale Pretraining for Text-to-Video Generation via
                  Transformers},
  booktitle    = {The Eleventh International Conference on Learning Representations,
                  {ICLR} 2023, Kigali, Rwanda, May 1-5, 2023},
  publisher    = {OpenReview.net},
  year         = {2023},
  url          = {https://openreview.net/forum?id=rB6TpjAuSRy},
  bibsource    = {dblp computer science bibliography, https://dblp.org}
}

@inproceedings{phenaki,
  author       = {Ruben Villegas and
                  Mohammad Babaeizadeh and
                  Pieter{-}Jan Kindermans and
                  Hernan Moraldo and
                  Han Zhang and
                  Mohammad Taghi Saffar and
                  Santiago Castro and
                  Julius Kunze and
                  Dumitru Erhan},
  title        = {Phenaki: Variable Length Video Generation from Open Domain Textual
                  Descriptions},
  booktitle    = {The Eleventh International Conference on Learning Representations,
                  {ICLR} 2023, Kigali, Rwanda, May 1-5, 2023},
  publisher    = {OpenReview.net},
  year         = {2023},
  url          = {https://openreview.net/forum?id=vOEXS39nOF},
  bibsource    = {dblp computer science bibliography, https://dblp.org}
}

@inproceedings{instructvideo,
  author       = {Hangjie Yuan and
                  Shiwei Zhang and
                  Xiang Wang and
                  Yujie Wei and
                  Tao Feng and
                  Yining Pan and
                  Yingya Zhang and
                  Ziwei Liu and
                  Samuel Albanie and
                  Dong Ni},
  title        = {InstructVideo: Instructing Video Diffusion Models with Human Feedback},
  booktitle    = {{IEEE/CVF} Conference on Computer Vision and Pattern Recognition,
                  {CVPR} 2024, Seattle, WA, USA, June 16-22, 2024},
  pages        = {6463--6474},
  publisher    = {{IEEE}},
  year         = {2024},
  url          = {https://doi.org/10.1109/CVPR52733.2024.00618},
  doi          = {10.1109/CVPR52733.2024.00618},
  bibsource    = {dblp computer science bibliography, https://dblp.org}
}

@article{modelscope,
  author       = {Jiuniu Wang and
                  Hangjie Yuan and
                  Dayou Chen and
                  Yingya Zhang and
                  Xiang Wang and
                  Shiwei Zhang},
  title        = {ModelScope Text-to-Video Technical Report},
  journal      = {CoRR},
  volume       = {abs/2308.06571},
  year         = {2023},
  url          = {https://doi.org/10.48550/arXiv.2308.06571},
  doi          = {10.48550/ARXIV.2308.06571},
  eprinttype    = {arXiv},
  eprint       = {2308.06571},
  bibsource    = {dblp computer science bibliography, https://dblp.org}
}

@article{DiCoDe2023,
  author       = {Yizhuo Li and
                  Yuying Ge and
                  Yixiao Ge and
                  Ping Luo and
                  Ying Shan},
  title        = {DiCoDe: Diffusion-Compressed Deep Tokens for Autoregressive Video
                  Generation with Language Models},
  journal      = {CoRR},
  volume       = {abs/2412.04446},
  year         = {2024},
  url          = {https://doi.org/10.48550/arXiv.2412.04446},
  doi          = {10.48550/ARXIV.2412.04446},
  eprinttype    = {arXiv},
  eprint       = {2412.04446},
  bibsource    = {dblp computer science bibliography, https://dblp.org}
}

@article{Multimodal2023,
  author       = {Yutao Sun and
                  Hangbo Bao and
                  Wenhui Wang and
                  Zhiliang Peng and
                  Li Dong and
                  Shaohan Huang and
                  Jianyong Wang and
                  Furu Wei},
  title        = {Multimodal Latent Language Modeling with Next-Token Diffusion},
  journal      = {CoRR},
  volume       = {abs/2412.08635},
  year         = {2024},
  url          = {https://doi.org/10.48550/arXiv.2412.08635},
  doi          = {10.48550/ARXIV.2412.08635},
  eprinttype    = {arXiv},
  eprint       = {2412.08635},
  bibsource    = {dblp computer science bibliography, https://dblp.org}
}

@article{MovieGen2023,
  author       = {Adam Polyak and
                  Amit Zohar and
                  Andrew Brown and
                  Andros Tjandra and
                  Animesh Sinha and
                  Ann Lee and
                  Apoorv Vyas and
                  Bowen Shi and
                  Chih{-}Yao Ma and
                  Ching{-}Yao Chuang and
                  David Yan and
                  Dhruv Choudhary and
                  Dingkang Wang and
                  Geet Sethi and
                  Guan Pang and
                  Haoyu Ma and
                  Ishan Misra and
                  Ji Hou and
                  Jialiang Wang and
                  Kiran Jagadeesh and
                  Kunpeng Li and
                  Luxin Zhang and
                  Mannat Singh and
                  Mary Williamson and
                  Matt Le and
                  Matthew Yu and
                  Mitesh Kumar Singh and
                  Peizhao Zhang and
                  Peter Vajda and
                  Quentin Duval and
                  Rohit Girdhar and
                  Roshan Sumbaly and
                  Sai Saketh Rambhatla and
                  Sam S. Tsai and
                  Samaneh Azadi and
                  Samyak Datta and
                  Sanyuan Chen and
                  Sean Bell and
                  Sharadh Ramaswamy and
                  Shelly Sheynin and
                  Siddharth Bhattacharya and
                  Simran Motwani and
                  Tao Xu and
                  Tianhe Li and
                  Tingbo Hou and
                  Wei{-}Ning Hsu and
                  Xi Yin and
                  Xiaoliang Dai and
                  Yaniv Taigman and
                  Yaqiao Luo and
                  Yen{-}Cheng Liu and
                  Yi{-}Chiao Wu and
                  Yue Zhao and
                  Yuval Kirstain and
                  Zecheng He and
                  Zijian He and
                  Albert Pumarola and
                  Ali K. Thabet and
                  Artsiom Sanakoyeu and
                  Arun Mallya and
                  Baishan Guo and
                  Boris Araya and
                  Breena Kerr and
                  Carleigh Wood and
                  Ce Liu and
                  Cen Peng and
                  Dmitry Vengertsev and
                  Edgar Sch{\"{o}}nfeld and
                  Elliot Blanchard and
                  Felix Juefei{-}Xu and
                  Fraylie Nord and
                  Jeff Liang and
                  John Hoffman and
                  Jonas Kohler and
                  Kaolin Fire and
                  Karthik Sivakumar and
                  Lawrence Chen and
                  Licheng Yu and
                  Luya Gao and
                  Markos Georgopoulos and
                  Rashel Moritz and
                  Sara K. Sampson and
                  Shikai Li and
                  Simone Parmeggiani and
                  Steve Fine and
                  Tara Fowler and
                  Vladan Petrovic and
                  Yuming Du},
  title        = {Movie Gen: {A} Cast of Media Foundation Models},
  journal      = {CoRR},
  volume       = {abs/2410.13720},
  year         = {2024},
  url          = {https://doi.org/10.48550/arXiv.2410.13720},
  doi          = {10.48550/ARXIV.2410.13720},
  eprinttype    = {arXiv},
  eprint       = {2410.13720},
  bibsource    = {dblp computer science bibliography, https://dblp.org}
}

@article{NOVA2023,
  title={Autoregressive Video Generation without Vector Quantization},
  author={Deng, Haoge and Pan, Ting and Diao, Haiwen and Luo, Zhengxiong and Cui, Yufeng and Lu, Huchuan and Shan, Shiguang and Qi, Yonggang and Wang, Xinlong},
  journal={arXiv preprint arXiv:2412.14169},
  year={2024}
}

@inproceedings{
wang2025larp,
title={{LARP}: Tokenizing Videos with a Learned Autoregressive Generative Prior},
author={Hanyu Wang and Saksham Suri and Yixuan Ren and Hao Chen and Abhinav Shrivastava},
booktitle={The Thirteenth International Conference on Learning Representations},
year={2025},
url={https://openreview.net/forum?id=Wr3UuEx72f}
}

@article{ren2025next,
  title={Next block prediction: Video generation via semi-autoregressive modeling},
  author={Ren, Shuhuai and Ma, Shuming and Sun, Xu and Wei, Furu},
  journal={arXiv preprint arXiv:2502.07737},
  year={2025}
}

@inproceedings{zhou2024visual,
  title={Visual in-context learning for large vision-language models},
  author={Zhou, Yucheng and Li, Xiang and Wang, Qianning and Shen, Jianbing},
  booktitle={Findings of the Association for Computational Linguistics: ACL 2024},
  pages={15890--15902},
  year={2024}
}

@inproceedings{zhou2025weak,
  title={Weak to strong generalization for large language models with multi-capabilities},
  author={Zhou, Yucheng and Shen, Jianbing and Cheng, Yu},
  booktitle={The Thirteenth International Conference on Learning Representations},
  year={2025}
}

@article{zhou2023thread,
  title={Thread of thought unraveling chaotic contexts},
  author={Zhou, Yucheng and Geng, Xiubo and Shen, Tao and Tao, Chongyang and Long, Guodong and Lou, Jian-Guang and Shen, Jianbing},
  journal={arXiv preprint arXiv:2311.08734},
  year={2023}
}

@article{wang2026ladr,
  title={LADR: Locality-Aware Dynamic Rescue for Efficient Text-to-Image Generation with Diffusion Large Language Models},
  author={Wang, Chenglin and Zhou, Yucheng and Chen, Shawn and Wang, Tao and Zhang, Kai},
  journal={arXiv preprint arXiv:2603.13450},
  year={2026}
}

@article{yang2025hicogen,
  title={HiCoGen: Hierarchical Compositional Text-to-Image Generation in Diffusion Models via Reinforcement Learning},
  author={Yang, Hongji and Zhou, Yucheng and Han, Wencheng and Tao, Runzhou and Qiu, Zhongying and Yang, Jianfei and Shen, Jianbing},
  journal={arXiv preprint arXiv:2511.19965},
  year={2025}
}

@inproceedings{yang2025dc,
  title={Dc-controlnet: Decoupling inter-and intra-element conditions in image generation with diffusion models},
  author={Yang, Hongji and Han, Wencheng and Zhou, Yucheng and Shen, Jianbing},
  booktitle={Proceedings of the IEEE/CVF International Conference on Computer Vision},
  pages={19065--19074},
  year={2025}
}

@inproceedings{chen2025towards,
  title={Towards stabilized and efficient diffusion transformers through long-skip-connections with spectral constraints},
  author={Chen, Guanjie and Zhao, Xinyu and Zhou, Yucheng and Qu, Xiaoye and Chen, Tianlong and Cheng, Yu},
  booktitle={Proceedings of the IEEE/CVF International Conference on Computer Vision},
  pages={17708--17718},
  year={2025}
}

@inproceedings{yang2025self,
  title={Self-rewarding large vision-language models for optimizing prompts in text-to-image generation},
  author={Yang, Hongji and Zhou, Yucheng and Han, Wencheng and Shen, Jianbing},
  booktitle={Findings of the Association for Computational Linguistics: ACL 2025},
  pages={7332--7349},
  year={2025}
}

@inproceedings{songbroad,
  title={From Broad Exploration to Stable Synthesis: Entropy-Guided Optimization for Autoregressive Image Generation},
  author={Song, Han and Zhou, Yucheng and Shen, Jianbing and Cheng, Yu},
  booktitle={The Fourteenth International Conference on Learning Representations},
  year={2026}
}

@inproceedings{zhoucondition,
  title={Condition Errors Refinement in Autoregressive Image Generation with Diffusion Loss},
  author={Zhou, Yucheng and Li, Hao and Shen, Jianbing},
  booktitle={The Fourteenth International Conference on Learning Representations},
  year={2026}
}
\clearpage
\appendix
\section{Model Details}\label{app:details}
We employ OmniTokenizer~\citep{OmniTokenizer}, a VQVAE for video, to transform video frames into discrete tokens for autoregressive language model training. 
However, the sequence length of the model trained on fewer frames is limited to 2048 tokens, while the original 17-frame video would typically be encoded as a sequence of 5120 tokens.
In this setting, we train the Fewer-Frames model on the reduced token sequence, and during inference, we first generate an initial 2048 tokens. 
Then, the last 1024 tokens of the previously generated sequence are used to produce the next 1024 tokens, and this process is repeated until the full 5120-token sequence is generated.
The resulting token sequence is subsequently decoded back into a 17-frame video.
The Baseline model, in contrast, is trained directly on the full 5120-token sequence.

\section{Proof of Proposition~\ref{prop:error_accumulation}}
\label{app:the1}

\begin{proof}
Let $\mathcal{M}_{Base}$ and $\mathcal{M}_{FF}$ denote the Baseline and Fewer-Frames models, respectively. Let the ground-truth sequence of token blocks be $\mathbf{T} = (\mathbf{T}_1, \dots, \mathbf{T}_K)$. The generated sequences are $\hat{\mathbf{T}}_{Base}$ and $\hat{\mathbf{T}}_{FF}$. We aim to prove that the expected total error of the Fewer-Frames model, $\mathbb{E}[\|\mathbf{T} - \hat{\mathbf{T}}_{FF}\|]$, is greater than or equal to that of the Baseline model, $\mathbb{E}[\|\mathbf{T} - \hat{\mathbf{T}}_{Base}\|]$.

Our proof is based on analyzing the error accumulation at each generation step, which stems from two main factors: (1) the information available for prediction and (2) the propagation of errors from previous steps (i.e., exposure bias).

\paragraph{1. One-Step Prediction Error with Perfect History.}
First, consider the ideal scenario of predicting block $\mathbf{T}_k$ given a perfect history of ground-truth blocks. The Baseline model uses a long context $\mathbf{T}_{<k} = (\mathbf{T}_1, \dots, \mathbf{T}_{k-1})$, while the Fewer-Frames model uses only a short context $\mathbf{T}_{k-1}$.
\begin{align}
\hat{\mathbf{T}}_k^{Base} &= \mathcal{M}_{Base}(\mathbf{T}_{<k}) \\
\hat{\mathbf{T}}_k^{FF} &= \mathcal{M}_{FF}(\mathbf{T}_{k-1})
\end{align}
The sequence $\mathbf{T}_{<k}$ contains strictly more information about the data distribution for predicting $\mathbf{T}_k$ than $\mathbf{T}_{k-1}$ alone. A model conditioned on more relevant information is expected to have a lower prediction error. Therefore, the one-step prediction error for the Baseline model is expected to be lower:
\begin{align}
\mathbb{E}[\|\mathbf{T}_k - \hat{\mathbf{T}}_k^{Base}\|] \le \mathbb{E}[\|\mathbf{T}_k - \hat{\mathbf{T}}_k^{FF}\|]
\label{eq:one_step_error}
\end{align}
This establishes the inherent modeling advantage of $\mathcal{M}_{Base}$ due to its access to a richer context.

\paragraph{2. Error Propagation under Exposure Bias.}
During actual inference, models are conditioned on their own previously generated, potentially erroneous outputs. Let $\mathbf{E}_{<k} = \mathbf{T}_{<k} - \hat{\mathbf{T}}_{<k}$ be the cumulative error up to step $k$.

For the Fewer-Frames model, the input for generating the $k$-th block is $\hat{\mathbf{T}}_{k-1} = \mathbf{T}_{k-1} - \mathbf{E}_{k-1}$. The model's output is $\mathcal{M}_{FF}(\mathbf{T}_{k-1} - \mathbf{E}_{k-1})$. Because $\mathcal{M}_{FF}$ was trained only on local transitions, it lacks the global context necessary to correct for drift. Any error $\mathbf{E}_{k-1}$ in its input directly and significantly perturbs the generation of the next block, causing errors to compound at each step.

For the Baseline model, the input is the full noisy history $\hat{\mathbf{T}}_{<k} = \mathbf{T}_{<k} - \mathbf{E}_{<k}$. Although this history is also imperfect, the model $\mathcal{M}_{Base}$ has been trained on long-range dependencies. The information contained in the earlier parts of the history ($\hat{\mathbf{T}}_{<k-1}$) can help the model mitigate the impact of recent errors in $\hat{\mathbf{T}}_{k-1}$. It is more robust to local perturbations because it can leverage a wider context to stay on the true data manifold.

Consequently, the error added at step $k$ is amplified more severely in the Fewer-Frames model due to its sensitivity to the error in its limited context. Let $\delta_k(\mathbf{E}_{<k})$ be the additional error introduced at step $k$ as a function of the previous cumulative error $\mathbf{E}_{<k}$. We argue that:
\begin{align}
\mathbb{E}[\|\delta_k^{FF}(\mathbf{E}_{<k})\|] \ge \mathbb{E}[\|\delta_k^{Base}(\mathbf{E}_{<k})\|]
\end{align}

The total error for each model is the accumulation of errors introduced at each step. The Fewer-Frames model starts with a higher intrinsic one-step prediction error (Eq.~\ref{eq:one_step_error}) and suffers from a more severe error propagation mechanism at each subsequent step. The combination of these two factors leads to a larger total accumulated error.
\begin{align}
&\mathbb{E}[\|\mathbf{E}_{FF}\|] = \mathbb{E}\left[\left\|\sum_{k=1}^K \delta_k^{FF}\right\|\right] \notag\\
\ge &\mathbb{E}\left[\left\|\sum_{k=1}^K \delta_k^{Base}\right\|\right] = \mathbb{E}[\|\mathbf{E}_{Base}\|]
\end{align}
\end{proof}

\section{Proof and Explanation for Proposition~\ref{prop:local_opt_error}}
\label{app:the2}

Proposition~\ref{prop:local_opt_error} states that the Local-Opt. model achieves a lower expected error within a window than the Fewer-Frames model. This advantage stems from two core aspects of its design: its optimization objective and the use of overlapping windows.

\subsection{Formal Argument for Error Minimization}
\begin{proof}
Let $\theta$ be the model parameters. The training objective of Local-Opt., as defined in Eq.~\ref{eq:local_loss}, is to minimize the expected negative log-likelihood over all possible windows $\mathcal{W}$:
\begin{align}
\mathcal{L}_{LO}(\theta) = \mathbb{E}_{\mathcal{W}} \left[ - \sum_{i=s}^{s+W-1} \log P(\mathbf{e}_i | \mathbf{E}_{<i}; \theta) \right]
\label{eq:lo_objective}
\end{align}
The resulting trained model, $\theta_{LO}$, is by definition the minimizer of this objective:
\begin{align}
\theta_{LO} = \arg\min_{\theta} \mathcal{L}_{LO}(\theta)
\end{align}
The Fewer-Frames model, $\theta_{FF}$, is trained on a different objective, $\mathcal{L}_{FF}$, which optimizes predictions on isolated, short sequences without access to true, long-range context. Since $\theta_{FF}$ is not optimized for the $\mathcal{L}_{LO}$ objective, its parameters are suboptimal for this loss function. Therefore, it follows that:
\begin{align}
\mathcal{L}_{LO}(\theta_{LO}) \leq \mathcal{L}_{LO}(\theta_{FF})
\label{eq:loss_inequality}
\end{align}
Minimizing the negative log-likelihood is a standard and effective surrogate for minimizing the prediction error (e.g., Mean Squared Error in the latent space). Thus, the inequality in Eq.~\ref{eq:loss_inequality} directly implies that the expected prediction error of the Local-Opt. model within any given window is lower than or equal to that of the Fewer-Frames model. This formally proves the proposition.
\end{proof}

\subsection{Contribution of Overlapping Windows}
The use of overlapping windows ($S < W$) provides an implicit mechanism for iterative refinement, which further reduces error by enhancing temporal consistency. We can formalize this by examining the gradient received by the parameters responsible for a single token $\mathbf{e}_i$.

Let $\mathcal{C}(i)$ be the set of all starting indices $s$ for windows $\mathcal{W}_s$ that contain the token $\mathbf{e}_i$. Due to the overlap, $|\mathcal{C}(i)| > 1$ for many tokens. The total loss term concerning $\mathbf{e}_i$ can be conceptualized as a sum over these windows:
\begin{align}
\mathcal{L}(\mathbf{e}_i; \theta) \propto \sum_{s \in \mathcal{C}(i)} -\log P(\mathbf{e}_i | \mathbf{E}_{<i}; \theta)
\end{align}
Consequently, the gradient update for $\theta$ with respect to the prediction of $\mathbf{e}_i$ is an aggregation of signals from different contexts:
\begin{align}
\nabla_{\theta} \mathcal{L}(\mathbf{e}_i) \propto \sum_{s \in \mathcal{C}(i)} \nabla_{\theta} \left( -\log P(\mathbf{e}_i | \mathbf{E}_{<i}; \theta) \right)
\end{align}
This averaging process forces the model to learn a representation for $\mathbf{e}_i$ that is valid and consistent across multiple preceding contexts ($\mathbf{E}_{<s_1}, \mathbf{E}_{<s_2}, \dots$). In contrast, the Fewer-Frames model learns from only a single, fixed context for each segment. This multi-context optimization makes the Local-Opt. model more robust and less prone to local errors, contributing to the overall error reduction stated in the proposition.

\section{Algorithm for Local-Opt. Training}
\label{appendix:local_opt_algorithm}
\begin{algorithm}[ht]\small
\caption{\small Local-Opt. Training}
\label{alg:local_optimization_appendix}
\begin{algorithmic}[1]
\STATE {\bfseries Input:} Full token sequence $\mathbf{E} = (\mathbf{e}_1, \mathbf{e}_2, \dots, \mathbf{e}_N)$, Window size $W$, Stride size $S = W/2$, Training iteration Number $I$
\FOR{$i=1$ {\bfseries to} $I$}
    \STATE Randomly select a starting index $s$ from $\{1 + k \cdot S \mid k \in \mathbb{Z}_{\ge 0}, 1 + k \cdot S \le N - W + 1 \}$
    \STATE Define the optimization window: $\mathcal{W} = \{s, s+1, \dots, s+W-1\}$
    \STATE Input sequence to the model: $\mathbf{X}_{in} = \mathbf{E}_{[:\max(\mathcal{W})]}$
    \STATE Model prediction for the window: $\hat{\mathbf{E}}_{LO,\mathcal{W}} = \text{Model}(\mathbf{X}_{in})_{[\min(\mathcal{W}):\max(\mathcal{W})]}$
    \STATE Calculate the loss: $\mathcal{L}(\mathbf{E}_{\mathcal{W}}, \hat{\mathbf{E}}_{LO,\mathcal{W}})$
    \STATE Update model parameters based on $\theta \leftarrow \theta - \eta \cdot \nabla_{\theta} \mathcal{L}$, with gradients masked outside the window $\mathcal{W}$
\ENDFOR
\end{algorithmic}
\end{algorithm}

\section{Further Analysis}
\label{sec:further_analysis}

\begin{table*}[!t]
\centering
\caption{FVD under different motion dynamics on UCF101.}
\label{tab:motion}
\begin{tabular}{lcccc}
\toprule
Motion Group & Avg. Optical Flow & Baseline & ReCo (Ours) & Improvement \\
\midrule
Low Motion & $<3.0$ & 215.4 & 204.6 & +5.0\% \\
Medium Motion & $3.0{-}6.0$ & 542.7 & \textbf{390.2} & \textbf{+28.1\%} \\
High Motion & $>6.0$ & 985.2 & 815.7 & +17.2\% \\
\bottomrule
\end{tabular}
\end{table*}
\subsection{Robustness under Different Motion Dynamics}
\label{sec:motion}

\begin{table}[!t]\small
\centering
\caption{Sensitivity analysis of the continuity weight $\lambda$.}
\label{tab:lambda}
\resizebox{\linewidth}{!}{
\begin{tabular}{cccc}
\toprule
$\lambda$ & FFS (FVD $\downarrow$) & SKY (FVD $\downarrow$) & Note \\
\midrule
0.0 & 127.11 & 179.84 & Local-Opt only \\
0.01 & 91.42 & 105.30 &  \\
0.05 & 73.85 & 89.22 &  \\
\textbf{0.1} & \textbf{72.60} & \textbf{87.50} & Default \\
0.5 & 76.93 & 94.15 &  \\
1.0 & 88.24 & 112.67 & Over-regularized \\
\bottomrule
\end{tabular}}
\end{table}
\begin{table}[!t]\small
\centering
\caption{Effect of window overlap on performance and training speed.}
\label{tab:overlap}
\begin{tabular}{lcccc}
\toprule
Overlap & FFS (FVD $\downarrow$) & SKY (FVD $\downarrow$) & Speedup \\
\midrule
0\% & 98.4 & 112.1 & $>2.0\times$ \\
50\% & \textbf{72.6} & \textbf{87.5} & $\sim2.0\times$ \\
75\% & 71.9 & 86.9 & $<2.0\times$ \\
\bottomrule
\end{tabular}
\end{table}
\begin{table}[!t]\small
\centering
\caption{Long video generation results on SkyTimelapse.}
\label{tab:long}
\begin{tabular}{lcc}
\toprule
Method & 32 Frames & 64 Frames \\
\midrule
Baseline (Full Context) & 83.5 & 79.2 \\
ReCo (Ours) & \textbf{78.8} & \textbf{71.1} \\
\bottomrule
\end{tabular}
\end{table}

To examine whether the Lipschitz-inspired continuity constraint remains effective under different temporal dynamics, we analyze model performance across videos with varying motion magnitudes.
Experiments are conducted on the UCF101 test set, where videos are grouped based on their average optical flow magnitude into low-, medium-, and high-motion regimes.

ReCo consistently outperforms the baseline across all motion regimes.
The most substantial improvement occurs in the medium-motion setting, indicating a sweet spot where continuity constraints most effectively suppress error propagation.
Importantly, even under high-motion scenarios, ReCo significantly mitigates model collapse, whereas the baseline exhibits severe error explosion.

\subsection{Sensitivity to the Continuity Weight $\lambda$}
\label{sec:lambda}

We study the sensitivity of ReCo to the continuity loss weight $\lambda$ in Eq.~(11).
Experiments are conducted on the FFS and SKY datasets by varying $\lambda$ while keeping other hyperparameters fixed.
Performance remains stable across a wide range of $\lambda$ values from 0.01 to 0.5, and all settings substantially outperform the variant without continuity regularization.
This indicates that ReCo is not sensitive to precise hyperparameter tuning.

\subsection{Effect of Window Overlap}
\label{sec:overlap}

We further analyze the effect of window overlap, which controls the trade-off between training speed and temporal consistency.
A larger overlap increases context sharing across optimization windows but reduces training efficiency.
An overlap of 50\% provides the best balance, achieving approximately a $2\times$ training speedup while maintaining strong temporal consistency.

\begin{figure}[!t]
    \centering
    \includegraphics[width=\linewidth]{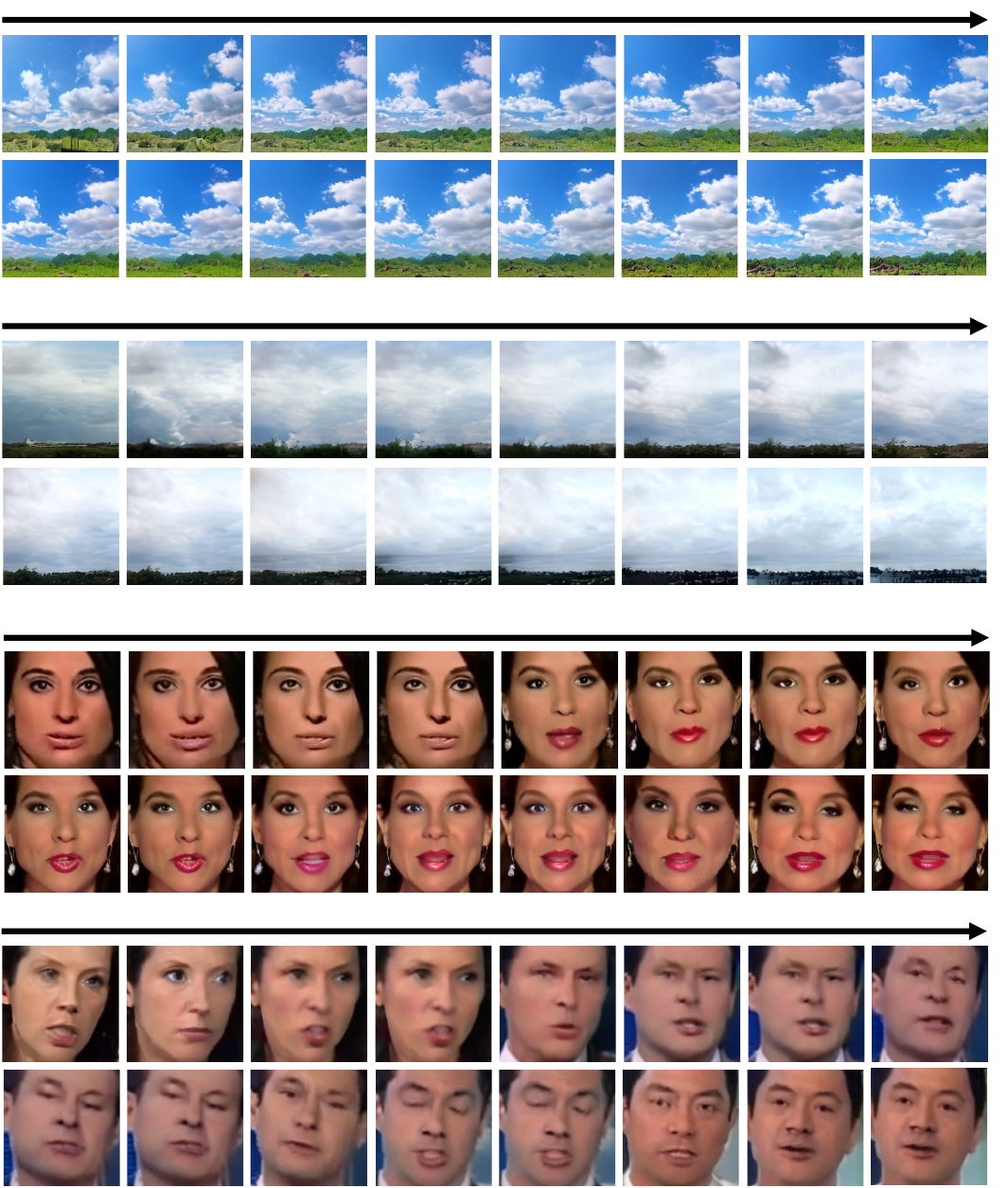}
    \caption{\small More Error Cases from Fewer-Frames model.}
    \label{fig:error_case}
\end{figure}

\subsection{Long Video Generation}
\label{sec:long_video}

Finally, we evaluate the stability of ReCo for longer video generation.
Experiments are conducted on the SkyTimelapse dataset with sequence lengths of 32 and 64 frames.

As the sequence length increases, the performance gap between ReCo and the baseline becomes larger, indicating that the continuity constraint effectively suppresses long-horizon error accumulation.

\section{More Error Cases from Fewer-Frames Model}\label{appendix:cases}
We visualize additional error cases generated by the Fewer-Frames model in Figure \ref{fig:error_case}.

\end{document}